\newtheorem{definition}{definition}
\newtheorem{proposition}{proposition}
\newtheorem{theorem}{Theorem}
\newtheorem{lemma}{Lemma}
\newtheorem{assumption}{Assumption}
\newtheorem*{remark}{Remark}
\numberwithin{equation}{section}
\renewenvironment{proof}{\noindent {\bfseries\rmfamily Proof.}}{\qed}
\newcommand{\ie}{i.e., }
\newcommand{\R}{\mathbb{R}}
\newcommand{\N}{\mathbb{N}}
\newcommand{\F}{\mathcal{F}}
\newcommand{\E}{\mathcal{E}}
\newcommand{\X}{\mathcal{X}}
\newcommand{\Y}{\mathcal{Y}}
\newcommand{\Z}{\mathcal{Z}}
\newcommand{\calP}{\mathcal{P}}
\newcommand{\bE}{\mathbb{E}}
\newcommand{\calH}{\mathcal{H}}
\newcommand{\jd}{\vspace{10pt}}
\newcommand{\biavg}{\frac{1}{n(n-1)} \sum_{i\neq j}^{n}}
\title{Generalization analysis with deep ReLU networks for metric and similarity learning}
\author{Junyu Zhou$^{1*} $\quad Puyu Wang$^{2*} $\quad Ding-Xuan Zhou$^3$ \vspace{5mm} \\ 
$^{1}$ Catholic University Eichstätt-Ingolstadt, Germany \\
$^{2}$ RPTU Kaiserslautern-Landau, Germany  \\
$^{3}$  University of Sydney, Australia 
}
\begin{document}
\date{}
\maketitle
\begin{abstract}
   While metric and similarity learning has been extensively studied from several theoretical perspectives, a rigorous understanding of its generalization performance is still lacking. In this paper, we investigate the generalization behavior of metric and similarity learning by exploiting the \textit{specific structure} of the true metric (i.e., the target function). In particular, by deriving the explicit form of the true metric for metric and similarity learning with the hinge loss, we construct a structured deep ReLU neural network as an approximation of the true metric, whose approximation ability depends on the network complexity. Here, the network complexity is characterized by the network depth, the number of nonzero weights, and the number of computational units. Based on the hypothesis space consisting of such structured deep ReLU networks, we establish excess risk bounds for metric and similarity learning by carefully controlling both the \textit{approximation error} and the \textit{estimation error}.
   An explicit excess risk rate is derived by choosing the proper capacity of the constructed hypothesis space. To the best of our knowledge, this is the \textit{first} generalization analysis that provides explicit excess risk bounds for metric and similarity learning. In addition, we investigate properties of the true metric for metric and similarity learning under more general loss functions. Experiments show that the proposed model is empirically competitive and better captures the underlying similarity structure.\footnote{The first two authors contributed equally.} 
\end{abstract}

\bigskip

\section{Introduction}

    Metric and similarity learning aims to study a metric $d$ from an observed sample $S$ that estimates the distance or the similarity between a pair of observers $(x,x')$. The observed sample $S=\{Z_i=(X_i, Y_i)\}_{i=1}^n$ is assumed to be independently drawn from an unknown distribution $\rho$ defined over the input-output space $\Z=\X\times\Y$, where $\X\subset\R^p$ is closed and bounded and $\Y = \{y_1,...,y_m\} \subset \R$ is the set of labels. Let $\tau(y, y')$ be the reducing function defined by $\tau(y, y') = 1$ if $y = y'$ and $ \tau(y, y')= -1$ else. The performance of $d$ on a pair $(z,z')$ is usually measured by $\mathbf{1}_{[\tau(y, y')d(x, x') > 0]}$, where $\mathbf{1}_{[A]}$ is the indicator function taking value $1$ if the event $A$ happens and $0$ otherwise. 
    Since the indicator function  $\mathbf{1}_{[\tau(y, y')d(x, x')  > 0]}$ is non-convex and discontinuous, one often considers replacing it with a convex surrogate loss $\ell: \R \to \R^+$. Given by $\ell\left(\tau(y, y') d(x, x')\right)$, the generalization error (true risk) associated with a metric $d$ is  defined as
	\begin{align*}
		\E(d) &=  \bE_{Z, Z'}\left[ \ell\big(\tau(Y, Y') d(X, X')\big)\right]\\
        &= \int_{\Z\times\Z} \ell\big(\tau(y,y')d(x,x') \big) d\rho(z)d\rho(z').
	\end{align*}
 The corresponding empirical error based on the sample $S$ is defined as
	\begin{equation*}
		\E_S(d) = \biavg \ell\big(\tau(Y_i, Y_j) d(X_i, X_j) \big).
	\end{equation*}

Let $\hat{d}_z = \arg\min_{d \in \calH} \E_S(d)$ be the minimizer of the empirical error over a hypothesis space $\calH$, and $d_\rho=\arg\min \E(d)$ be the true metric that minimizes the generalization error over the space of all measurable functions on $\X\times\X$.
The statistical generalization performance of $\hat{d}_z$ is measured by the excess generalization error $\E(\hat{d}_z) - \E(d_\rho)$, i.e., the distance between the expected risk $\E(\hat{d}_z)$ and the least possible risk $\E(d_\rho)$. 
To examine the excess generalization error, we use the following error decomposition:  
    \begin{align}\label{eq:decomp}       \E(\hat{d}_z) - \E(d_\rho) = \{\E(\hat{d}_z) - \E(d_\calH)\} + \{\E(d_\calH) - \E(d_\rho)\},    \end{align}
    where $d_\calH=\arg\min_{d\in\calH}\E(d)$ is the minimizer of the generalization error over the hypothesis space $\calH$. The terms $\E(\hat{d}_z) - \E(d_\calH)$ and $\E(d_\calH) - \E(d_\rho)$ in \eqref{eq:decomp} are called the estimation error and the approximation error, respectively. 
  
There are a considerable amount of theoretical works on metric and similarity learning \cite{cao,davis2007information,guo2014guaranteed,jin2009,kar2011similarity,lei2016generalization,maurer2008learning, tatlimetric, wu2025stability, fast2019}, 
most of which focused on learning the Mahalanobis distance $d_M(x,x')=(x - x')^\top M(x - x')$ for metric learning \cite{cao,davis2007information,jin2009, lei2016generalization,fast2019} and the pairwise similarity function $s_M(x,x') = x^\top Mx'$ for similarity learning \cite{cao,chechik2010large,guo2014guaranteed,kar2011similarity,maurer2008learning,shalit2010online}. Here, $M\in\mathbb{S}^+_p\subset \R^{p\times p}$ is a positive semi-definite matrix. The corresponding estimation error has been studied in \cite{cao,guo2014guaranteed,deep2019huai,fast2019}. Specifically, \cite{cao} studied the estimation error with the hypothesis space $\calH=\{(x-x')^\top M(x-x') - b: M\in\mathbb{S}^+_p,b\in\R^+\}$ for metric learning with the hinge loss. With the same hypothesis space, \cite{fast2019} derived fast learning rates of the estimation error for metric learning with smooth loss function and strongly convex objective. \cite{guo2014guaranteed} investigated the estimation error with the hypothesis space $\calH = \{b - x^\top Mx':M\in\mathbb{S}^+_p,b\in\R^+\}$ for similarity learning.
However, the expressive power of the studied hypothesis spaces is very limited.  Indeed, functions in the Mahalanobis distance and the pairwise similarity can be viewed as a multivariate polynomial with order $2$ on $\R^p$, then the dimension of the linear span of the hypothesis space $\dim(span(\calH))$ is controlled by $p^2$. It implies that the dimension $p$ of the input space determines the complexity of $\calH$ and further limits the expressive ability of $\calH$. If the expressive ability of $\calH$ is not enough, there will be a situation where the estimation error converges to $0$ fast while the approximation error is very large. To fully understand the generalization performance of $\hat{d}_z$, it is necessary to study the approximation error in addition to the estimation error. To the best of our knowledge, no study provides  estimates of the approximation error for metric and similarity learning problems, due to the fact that the explicit form of the true metric is unknown. 

In this paper, we address the gap in the theoretical understanding of the generalization behavior of metric and similarity learning with the hinge loss by exploiting the specific structure of the true metric.
Our main contributions are summarized as follows.
    \begin{itemize} 
        \item We provide a comprehensive generalization analysis for metric and similarity learning with the hinge loss. In contrast to previous works \cite{cao,guo2014guaranteed,deep2019huai,fast2019}, which focus only on the estimation error, we study both the estimation error and the approximation error, and thereby derive the \textit{first} explicit excess risk bounds for metric and similarity learning. Under mild conditions, we further establish a refined learning rate up to a logarithmic factor,
        $O(n^{-\frac{(\theta + 1) r}{p + (\theta + 2)r}})$, where $p$ is the dimension of the input space, $\theta$ is the parameter in the noise condition, and $r$ is the smoothness index of the conditional probabilities.
        \item A key technical contribution of this work is the construction of a structured deep ReLU neural network $F_a(1-2\sum_{i=1}^m \phi(h_i(x),h_i(x')))$ to approximate the true metric $d_\rho$, based on the representation  $d_\rho(x,x')=sgn(1-2\sum_{i=1}^m p_i(x)p_i(x')) $ for $x,x'\in\X$  (see Theorem~\ref{true_predictor}), where $p_i(x)=Prob\{Y=y_i\mid X=x\}$ and $a>0$ is a free parameter. This representation shows that, if one can design a family of sub-networks that approximate $p_i(x)$ well for each $i=1,\ldots,m$, then one can obtain a good approximation of $d_\rho(x,x')$ by combining a deep ReLU network $\phi(x,y)$ for approximating the multiplication function $xy$ with a two-layer ReLU network $F_a(x)$ for approximating the sign function $sgn(x)$. Our results (Theorems~\ref{approximation_error}, \ref{thm:estimationerror}, and \ref{thm:excess}) show that the network with the constructed form can achieve favorable approximation and estimation error bounds for metric and similarity learning.

        \item We also investigate structural properties of the true metric and the problem formulation for metric and similarity learning under general loss functions.
        In particular, we show that the bias term appearing in many existing works \cite{cao,jin2009,fast2019} is not intrinsic to the theoretical characterization of the true metric, and it is natural to focus on the case where the output space $\Y$ consists of finitely many labels rather than contains a continuous interval.
        Moreover, we show that the true metric is symmetric, and under mild and intuitive conditions, the true metric between two identical samples is always less than or equal to that between two different samples. This provides further theoretical justification for the use of symmetric models such as the Mahalanobis distance in metric learning.
\end{itemize}

The remainder of the paper is organized as follows. In Section~\ref{sec:generalization}, we present generalization bounds for metric and similarity learning based on deep ReLU networks. Section~\ref{ssec:regularity} investigates structural properties of the true metric, and Section~\ref{sec:exper} empirically validates our theoretical findings. Finally, Section~\ref{sec:conclu} contains concluding remarks.

\section{Generalization analysis with deep ReLU networks}\label{sec:generalization}
    We begin by introducing some notations that will be frequently used in the rest of the paper. Let $Z = (X,Y)$ and $Z' = (X',Y')$ be random variables independently following $\rho$, and $\rho(\cdot|x)$ be the conditional distribution of $Y$ given $X=x$. We denote the conditional probability of $Y = Y'$ given $X = x, X' = x'$ as 
    \begin{align}\label{eta}
	 	\eta(x, x') = Prob\{Y = Y'|X = x, X' = x'\} = \int_{\Y\times\Y} \mathbf{1}_{[y = y']} \, d\rho(y|x)d\rho(y'|x'), 
	\end{align}
 which is the probability that two observers $x$ and $x'$ are affiliated to the same class. One can see that $\eta$ only depends on the conditional distributions of $Y$ conditioned on the observers. For any $x,x',x''\in\X$,  if $\rho(\cdot|x') = \rho(\cdot|x'')$, then there holds $\eta(x,x') = \eta(x,x'')$. In addition, it can be readily observed that $\eta(x,x')$ is symmetric, \ie $\eta(x,x') = \eta(x',x)$ for any $x,x'\in\X$.

    Denote by $P_x := [Prob\{Y = y_1 | X = x\}, \ldots, Prob\{Y = y_m | X = x\}] \in \R^m$ the probability distribution vector of $Y$ conditioned on $X=x$. Then the conditional probability \eqref{eta} can be represented as the standard inner product in the Euclidean space:
    \begin{align*}
        \eta(x,x') &= \sum_{i=1}^mProb\{Y = Y' = y_i|X = x, X' = x'\}\\
        &= \sum_{i=1}^mProb\{Y = y_i|X = x\}Prob\{Y' = y_i| X' = x'\}\\
        &= \langle P_x, P_{x'}\rangle.
    \end{align*}

	We aim to estimate the excess generalization error $\E(\hat{d}_z) - \E(d_\rho) = \{\E(\hat{d}_z) - \E(d_\calH)\} + \{\E(d_\calH) - \E(d_\rho)\}$, where $\hat{d}_z$ is the minimizer of the empirical error over the hypothesis space. In this section, we focus on the hinge loss 
    $$\ell(\tau(y,y')d(x, x')) = \big(1 + \tau(y, y') d(x, x')\big)_+,$$
    where $(\cdot)_+ = \max\{0, \cdot\}$. 

    \subsection{Explicit form of the true metric} 
    Understanding the true metric $d_\rho$ is crucial to estimating the approximation error. The following theorem presents an explicit representation of the true metric $d_\rho$ for the hinge loss.
	\begin{theorem}\label{true_predictor}
		The true metric with the hinge loss can be represented as
        \begin{align*}
            d_\rho(x, x') &= sgn(1 - 2 \eta(x, x'))\\
            &= sgn(1 - 2 \langle P_x, P_{x'}\rangle)
        \end{align*}
		for almost every pair $x, x'\in\X$.
	\end{theorem}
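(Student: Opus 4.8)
The plan is to reduce the global minimization of $\E(d)$ over measurable functions on $\X\times\X$ to a pointwise minimization in the scalar value $t = d(x,x')$, exploiting the fact that the hinge loss depends on the labels only through the sign $\tau(Y,Y')$. First I would disintegrate the risk: writing $\rho_{\X}$ for the marginal of $\rho$ on $\X$,
\[
\E(d) = \int_{\X\times\X} \left( \int_{\Y\times\Y} \big(1 + \tau(y,y')\, d(x,x')\big)_+ \, d\rho(y|x)\, d\rho(y'|x') \right) d\rho_{\X}(x)\, d\rho_{\X}(x').
\]
Since $\tau(y,y') = 1$ on $\{y = y'\}$ and $-1$ elsewhere, and since by \eqref{eta} the product measure $\rho(\cdot|x)\times\rho(\cdot|x')$ assigns mass $\eta(x,x')$ to $\{y = y'\}$, the inner integral depends on $d$ only through $t := d(x,x')$ and equals
\[
\psi_{x,x'}(t) := \eta(x,x')\,(1+t)_+ + \big(1 - \eta(x,x')\big)\,(1-t)_+ .
\]
Hence a measurable $d$ minimizes $\E$ if and only if, for $\rho_{\X}\times\rho_{\X}$-almost every $(x,x')$, the value $d(x,x')$ minimizes $\psi_{x,x'}$; moreover the selection $(x,x')\mapsto \arg\min_{t}\psi_{x,x'}(t)$ is measurable because $(x,x')\mapsto\eta(x,x')$ is, as is evident from \eqref{eta} or from the identity $\eta(x,x') = \langle P_x, P_{x'}\rangle$ recorded above.

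It then remains to solve the elementary one-variable problem $\min_{t\in\R}\psi_{x,x'}(t)$. Writing $\eta = \eta(x,x')$ and splitting into the regions $t \le -1$, $-1 \le t \le 1$, and $t \ge 1$, one checks that $\psi$ is nonincreasing on $(-\infty,-1]$, nondecreasing on $[1,\infty)$, and affine with slope $2\eta - 1$ on $[-1,1]$, with $\psi(-1) = 2(1-\eta)$ and $\psi(1) = 2\eta$. Therefore the minimum over $\R$ is attained at $t = -1$ when $\eta > 1/2$, at $t = 1$ when $\eta < 1/2$, and at every $t\in[-1,1]$ when $\eta = 1/2$. In all three cases $t = sgn(1 - 2\eta)$ is a minimizer, and it is the unique one whenever $\eta \ne 1/2$. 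Taking $d_\rho(x,x') = sgn(1 - 2\eta(x,x'))$ thus gives a minimizer of $\E$, and substituting $\eta(x,x') = \langle P_x, P_{x'}\rangle$ produces the second stated expression.

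The argument is entirely routine, so no step is a genuine obstacle; the only point requiring a little care is the a.e./selection issue. On the set $\{\, \eta(x,x') = 1/2 \,\}$ the conditional risk $\psi_{x,x'}$ has the whole interval $[-1,1]$ as its set of minimizers, so the claimed identity should be read as: $sgn(1-2\eta)$ is a version of $d_\rho$, and any minimizer of $\E$ agrees with it $\rho_{\X}\times\rho_{\X}$-almost everywhere off that set --- which is precisely the content of ``for almost every pair $x,x'$''. I would also note that $\psi_{x,x'} = \psi_{x',x}$ because $\eta(x,x') = \eta(x',x)$, so the distinguished minimizer $sgn(1-2\eta)$ is automatically symmetric, anticipating the regularity statements of Section~\ref{ssec:regularity}.
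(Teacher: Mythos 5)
Your proposal is correct and follows essentially the same route as the paper: disintegrate the risk via the tower property, reduce to the pointwise scalar problem $\min_t \eta(1+t)_+ + (1-\eta)(1-t)_+$, and solve it by the same piecewise-linear case analysis on $\eta$ versus $1/2$. The extra remarks on measurability of the selection and non-uniqueness on $\{\eta = 1/2\}$ are a slightly more careful treatment of the ``almost every pair'' qualifier, but do not change the argument.
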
	
	\begin{proof}
 By the tower property of the conditional expectation, the generalization error with a metric $d$ can be written as
    \begin{align*}
        \E(d) = \bE_{X,X'}\left[\bE_{Y|X, Y'|X'}\left[ \ell(\tau(Y, Y') d(X, X') \right]\right].
    \end{align*}
   The above observation implies that for almost every pair $x, x' \in \X$, the true metric $d_\rho(x,x')$ is obtained by minimizing the inner conditional expectation which can be rewritten as
    \begin{align}\label{eq:d_rho}
        &d_\rho(x,x')\nonumber\\ & = \arg\min_{t \in \R} \bE_{Y|X = x, Y'|X' = x'}\left[ \ell(\tau(Y, Y') t)\right] \nonumber \\
        &= \arg\min_{t \in \R} Prob \{Y = Y'| X = x, X' = x'\}\ell (t) + Prob\{Y \neq Y'| X = x, X' = x'\} \ell(- t) \nonumber \\
        &= \arg\min_{t \in \R} \eta(x,x')\ell(t) + (1 - \eta(x,x'))\ell(- t),
    \end{align}
    where the second equality is due to the definition of $\tau(y, y')$.
    
	According to \eqref{eq:d_rho}, the true metric with the hinge loss can be expressed as
		\begin{align*}
			d_\rho(x,x') &= \arg\min_{t\in\R}\left\{ \eta(x, x')\left(1 + t\right)_+ + \left(1 - \eta(x, x')\right)\left(1 - t\right)_+\right\}\\
			&= \arg\min_{t\in\R}\left\{\begin{aligned}
				&(1 - \eta(x, x'))(1 - t), &&\text{if } t< - 1,\\
				&(2\eta(x, x') - 1)t + 1,&& \text{if } t\in [-1, 1],\\
				&\eta(x,x')(1 + t), && \text{if } t > 1.
			\end{aligned}
			\right.
		\end{align*}
	For the case $\eta(x, x') > 1/2$, note that the objective function is piecewise linear and tends to $+\infty$ as $t \to \pm \infty$, we can derive that $d_\rho(x, x') = - 1$ and the corresponding minimum of the objective function is $2(1 - \eta(x, x'))$. For the case $\eta(x, x') < 1/2$, we can get that  $d_\rho(x, x') = 1$ and the corresponding minimum is $2\eta(x, x')$. For the case $\eta(x,x') = 1/2$, we have $d_\rho(x,x')\in[-1,1]$ and the corresponding minimum be $0$. Then we can conclude that $d_\rho(x,x') = sgn(1 - 2\eta(x,x'))$, which completes the proof.
	\end{proof}
    \begin{remark}
        Unlike using the Mahalanobis distance to measure the similarity between sample pairs, deep metric learning \cite{deep2019huai,deepsurvey2019,revisiting2020} aims to learn a nonlinear embedding function $\phi: \X \to \Phi \subset \R^s$ with  $s\in\N^+$, such that similar data points $x, x'$ are close in the embedding space $\Phi$ under a predefined distance function $d(\phi(x), \phi(x'))$ and far from each other if they are dissimilar.  Note Theorem~\ref{true_predictor} shows that  for the hinge loss, the true predictor $d_\rho(x,x') = sgn(1 - 2\langle P_x, P_{x'}\rangle)$. Then the true embedding function is $\phi(x) = P_x$ under this setting, and the distance function can be further defined as $d(\phi(x), \phi(x')) = sgn(1 - 2\langle P(x), P(x')\rangle)$. This indicates that learning a nonlinear embedding function in deep metric learning with the hinge loss is in fact learning the conditional probability $P_x$.
    \end{remark}
 


    According to the specific structure of the true metric $d_\rho$ given in Theorem~\ref{true_predictor}, we can construct a structured deep network with ReLU activation as an approximation of $d_\rho$ and further design the corresponding hypothesis space.  Before that, we first introduce some assumptions and useful lemmas. 
    
    To define the Sobolev smoothness of the conditional probabilities $p_i$ for $i=1,\ldots,m$, we take $\X = [0,1]^p$ in the remainder of this section.
  We define the Sobolev space $W^{r,\infty}([0,1]^p)$ as the space of functions on $[0,1]^p$ lying in $L^\infty([0,1]^p)$ along with their partial derivatives up to order $r$. The norm in $W^{r,\infty}([0,1]^p)$ is defined as
    \begin{align*}
        \|f\|_{W^{r,\infty}([0,1]^p)}:= \max_{\alpha \in \mathbb{Z}^p_+: \|\alpha\|_1 \le r} \|D^\alpha f\|_{L^\infty([0,1]^p)},
    \end{align*}
    where $\|\alpha\|_1 = \sum_{i=1}^p |\alpha_i|$ denotes the $l^1$ norm of $\alpha=(\alpha_1,\ldots,\alpha_p)\in\mathbb{Z}^p_+$, and $D^\alpha f = \frac{\partial ^{\|\alpha\|_1} f}{\partial^{\alpha_1}x_1\cdots\partial^{\alpha_d}x_d}$ denotes the partial derivatives of $f$ with order $\alpha$.
 
   
    For notational simplicity, denote by $p_i(x)=Prob\{Y = y_i | X = x\}$ the $i$-th component of $P_x$. The following assumption means that all $r$-th partial derivatives of $p_i(x)$ exist and their $L^\infty$ norms are bounded by $1$.
    \begin{assumption}\label{smooth_regularity}
        The conditional probability $p_i(x) \in W^{r,\infty}([0,1]^p)$ has the Sobolev norm not greater than $1$ for each $i =1,\ldots,m$ (This upper bound for the Sobolev norm can be extended to any finite constant, for simplicity we set it to $1$).
    \end{assumption}
In binary classification problems, we often introduce a noise condition that says the ambiguous points $Prob\{Y = 1|X=x\} \approx 1/2$ occur with a small probability. When this condition is satisfied, it suggests that the classification problem is well-posed and learning algorithms have the potential to achieve faster convergence rates. Similarly, we can extend this notion in metric and similarity learning to suggest that the probability of $\eta(X, X') \approx 1/2$ is relatively small. In the rest of this paper, we denote by $C_{\alpha,\beta,\gamma}$ a constant only depending on parameters $\alpha,\beta,\gamma$, and it may differ from line to line.   
	\begin{assumption}[Tsybakov's noise condition]\label{noise_condition}
		There exist constants $\theta > 0$ and $C_\theta>0$ such that for any $t>0$,
		$$Prob\{|\eta(X, X') - 1/2| \le t\} \le C_\theta t^\theta. $$
	\end{assumption}
\subsection{Structured deep ReLU networks} 

For \(L\in\mathbb{N}\), let \(h:\mathbb{R}^p\to\mathbb{R}\) be a deep ReLU network of depth \(L\) with activation function \(\sigma(x)=\max\{x,0\}\), defined by
\begin{align}\label{eq:nns}
        h(x) = \sigma(T_L \sigma(T_{L-1}\cdots \sigma(T_1x + b_1)\cdots + b_{L-1}) + b_L),
    \end{align}
where \(\sigma\) acts componentwise. For each \(l=1,\ldots,L\), \(T_l\in\mathbb{R}^{w_l\times w_{l-1}}\) and \(b_l\in\mathbb{R}^{w_l}\) denote, respectively, the weight matrix and bias vector of the \(l\)-th layer, where \(w_l\in\mathbb{N}\) is the width of the \(l\)-th layer and \(w_0=p\) is the input dimension. We define the number of nonzero parameters and the number of computational units of \(h\) by $h$ be $\sum_{l=1}^L \|T_l\|_0 + \|b_l\|_0$ and $\sum_{l=1}^L w_l$  respectively, where \(\|\cdot\|_0\) denotes the number of nonzero entries of the corresponding matrix or vector.
    
    The following two lemmas established in \cite{yarotsky} show the expressive power of deep ReLU networks in the setting of approximations in Sobolev spaces and the product function. 
    \begin{lemma}\label{approximate_smooth}
        For any $p, r \in \N$, $\epsilon \in (0,1/2)$ and any function $f \in W^{r,\infty}([0,1]^p)$ with Sobolev norm not larger than $1$, there exists a deep ReLU network $h$ with depth at most $C_{p,r} \log(1/\epsilon)$ and the number of nonzero weights and computational units at most $C_{p,r} \epsilon^{-\frac{p}{r}} \log(1/\epsilon)$ such that
        \begin{align*}
            \|h - f\|_{L^\infty([0,1]^p)} \le \epsilon.
        \end{align*}
    \end{lemma}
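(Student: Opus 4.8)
The plan is to follow the classical localized-Taylor-approximation strategy for ReLU networks (this is exactly the construction behind the cited result of \cite{yarotsky}). First I would fix an integer $N\in\N$, to be chosen later of order $\epsilon^{-1/r}$, introduce the grid $\{\mathbf{m}/N:\mathbf{m}\in\{0,1,\ldots,N\}^p\}$, and build an associated partition of unity. Let $\psi$ be the one-dimensional trapezoid (hat) function that equals $1$ near $0$ and is supported on an interval of length $O(1/N)$ after scaling, and set $\phi_{\mathbf{m}}(x)=\prod_{k=1}^p\psi\big(N(x_k-m_k/N)\big)$. These satisfy $\sum_{\mathbf{m}}\phi_{\mathbf{m}}\equiv 1$ on $[0,1]^p$, each $\phi_{\mathbf{m}}$ is supported on a cube of side $O(1/N)$, and at every point at most $2^p$ of them are nonzero; moreover each factor $\psi(N(\cdot-m_k/N))$ is exactly a shallow ReLU network (a fixed combination of shifted ReLUs).

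Second, on the patch around $\mathbf{m}/N$ I would replace $f$ by its degree-$(r-1)$ Taylor polynomial $P_{\mathbf{m}}(x)=\sum_{\|\alpha\|_1\le r-1}\frac{D^\alpha f(\mathbf{m}/N)}{\alpha!}(x-\mathbf{m}/N)^\alpha$. Taylor's theorem together with the Sobolev bound $\|f\|_{W^{r,\infty}}\le 1$ yields $\big\|f-\sum_{\mathbf{m}}\phi_{\mathbf{m}}P_{\mathbf{m}}\big\|_{L^\infty}\le C_{p,r}N^{-r}$, since only $O(1)$ patches contribute at each point. It then remains to realize the piecewise-polynomial function $g:=\sum_{\mathbf{m}}\phi_{\mathbf{m}}P_{\mathbf{m}}$ approximately by a ReLU network; $g$ is a sum of $O(N^p)$ terms, each a product of at most $p+(r-1)$ factors lying in $[-1,1]$ (after rescaling the linear factors $(x_k-m_k/N)$ by $N$), namely the $p$ hat functions, the $\le r-1$ rescaled linear factors, and a bounded coefficient $D^\alpha f(\mathbf{m}/N)/\alpha!$.

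Third, the only nonlinearity needed is multiplication. I would use Yarotsky's sawtooth gadget: $t\mapsto t^2$ on $[0,1]$ is approximated to accuracy $2^{-2s}$ by a ReLU network of depth $O(s)$ and $O(s)$ weights (iterating the tent map and telescoping), and then a bilinear product follows by polarization, $uv=\tfrac14\big((u+v)^2-(u-v)^2\big)$, giving a gadget $\widetilde{\times}_\delta$ with $|\widetilde{\times}_\delta(u,v)-uv|\le\delta$, depth $O(\log(1/\delta))$ and $O(\log(1/\delta))$ weights on $[-1,1]^2$. A product of $p+r-1$ bounded factors is assembled by a balanced binary tree of such gadgets of depth $O(\log(p+r))=O(1)$, and a Lipschitz/telescoping estimate shows each term is computed with error $O((p+r)\delta)$; summing over the at most $2^p=O(1)$ active terms at any point keeps the error $O_{p,r}(\delta)$ (inactive terms vanish identically, so no selection network is even needed). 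Choosing $\delta\asymp N^{-r}$ and $N\asymp\epsilon^{-1/r}$ makes the total approximation error $\le\epsilon$, the depth $O(\log N+\log(1/\delta))=O(\log(1/\epsilon))$, and the number of nonzero weights and units $O(N^p\log(1/\epsilon))=O(\epsilon^{-p/r}\log(1/\epsilon))$, with all constants depending only on $p,r$.

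The main obstacle I anticipate is the simultaneous bookkeeping of error propagation and network size across the $O(N^p)$ parallel local branches: one must check that the per-multiplication accuracy $\delta$ need only be polynomially small in $1/\epsilon$ (so the depth of each gadget, and hence of the whole network, stays logarithmic), that the bounded overlap of the partition of unity converts the $N^p$ branches into only a constant-size error blow-up pointwise, and that the rescaling of $(x_k-m_k/N)$ by $N$ keeps every factor in the range $[-1,1]$ on which the multiplication gadget is valid while not degrading the $N^{-r}$ Taylor estimate — it is precisely in balancing $\delta$, $N$, and these rescalings that the exponent $p/r$ in the final bound is pinned down.
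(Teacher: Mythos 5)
The paper offers no proof of this lemma --- it is quoted directly from Yarotsky's approximation theorem --- and your reconstruction is precisely Yarotsky's original argument (a trapezoid partition of unity on a grid of spacing $N^{-1}\asymp\epsilon^{1/r}$, local Taylor polynomials of degree $r-1$, and product gadgets of depth $O(\log(1/\delta))$ built from the sawtooth squaring network), so it is correct and matches the intended source. The one small repair concerns your claim that inactive terms ``vanish identically'': the polarization-based product gadget built from an approximate squaring network need not output $0$ when one factor equals $0$, so you should either use Yarotsky's version of the gadget, which does have this property (it is exactly the condition $\phi(x,y)=0$ whenever $xy=0$ recorded in Lemma~\ref{approximate_multi}), or instead take $\delta\asymp\epsilon N^{-p}$, which still gives $\log(1/\delta)=O_{p,r}(\log(1/\epsilon))$ and leaves the stated depth and weight bounds unchanged.
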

     We will use a deep ReLU network $h_i$ to approximate the conditional probability $p_i\in [0,1]$ later. According to Lemma \ref{approximate_smooth}, if Assumption \ref{smooth_regularity} holds, we have $h_i \in [-1,2]$.
	\begin{lemma}\label{approximate_multi}
		For any $\epsilon \in (0, 1/2)$, there exists a deep ReLU network $\phi$ with the depth and the number of weights and computation units at most $C\log(1/\epsilon)$ such that
		\begin{align*}
		     \|\phi(x,y) - xy\|_{L^\infty([-1,2]^2)} \le \epsilon \ \text{ and } \ \phi(x, y) = 0  \ \text{ if } \ xy = 0 .
		\end{align*} 
	\end{lemma}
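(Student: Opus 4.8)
\emph{Proof proposal.} The plan is to reduce the approximation of the product $xy$ to the approximation of the one-dimensional square function $t\mapsto t^{2}$ via a polarization identity, then to realize the square function by Yarotsky's sawtooth construction; the extra vanishing property will come for free once the square approximation is built as an \emph{even} function.

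First I would recall the sawtooth construction on $[0,1]$. Let $g(t)=2\min\{t,1-t\}$ for $t\in[0,1]$ and $g(t)=0$ otherwise; since $g(t)=\sigma\big(1-\sigma(2t-1)-\sigma(1-2t)\big)$, this is exactly a one-hidden-layer ReLU network. Its $s$-fold self-composition $g_{s}=g\circ\cdots\circ g$ is a sawtooth with $2^{s-1}$ teeth, and
\[
S_{n}(t)\;=\;t-\sum_{s=1}^{n}4^{-s}\,g_{s}(t)
\]
satisfies $\|S_{n}-t^{2}\|_{L^{\infty}([0,1])}\le 4^{-n-1}$, with $S_{n}(0)=0$ (as $g_{s}(0)=0$) and $0\le S_{n}\le 1$ on $[0,1]$. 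The network computing $S_{n}$ has depth $O(n)$ and $O(n)$ weights and units: one branch composes $g_{1},\dots,g_{n}$ successively, a parallel branch accumulates the partial sums $\sum_{s\le k}4^{-s}g_{s}$, and a third scalar carries $t$ forward, all intermediate values staying nonnegative so only the ReLU activation is needed. Since $x,y\in[-1,2]$ we have $x\pm y\in[-M,M]$ with $M=4$; using $|u|=\sigma(u)+\sigma(-u)$ and the scaling $u^{2}=M^{2}(|u|/M)^{2}$, put
\[
Q(u)\;:=\;M^{2}\,S_{n}\!\big(|u|/M\big),
\]
a ReLU network of depth and size $O(n)$ with $\|Q(u)-u^{2}\|_{L^{\infty}([-M,M])}\le M^{2}4^{-n-1}$, and — by construction — an \emph{even} function of $u$ with $Q(0)=0$.

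Next I would set $\phi(x,y):=\tfrac14\big(Q(x+y)-Q(x-y)\big)$, realized by stacking the two sub-networks $Q(x+y)$ and $Q(x-y)$ on the input $(x,y)$ and taking this linear combination in the output layer. Then
\[
|\phi(x,y)-xy|=\tfrac14\big|(Q(x+y)-(x+y)^{2})-(Q(x-y)-(x-y)^{2})\big|\le\tfrac12 M^{2}4^{-n-1},
\]
so choosing $n=\big\lceil\tfrac12\log_{2}(M^{2}/(2\epsilon))\big\rceil=O(\log(1/\epsilon))$ gives accuracy $\epsilon$ while keeping depth, weight count, and unit count all $O(\log(1/\epsilon))$. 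For the vanishing property, if $xy=0$ then $x=0$ or $y=0$, and in either case $|x+y|=|x-y|$; since $Q$ depends on its argument only through its absolute value, $Q(x+y)=Q(x-y)$ and hence $\phi(x,y)=0$ identically on the two coordinate axes.

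The main obstacle I anticipate is bookkeeping rather than anything conceptual: one must check that composing the $n$ copies of $g$ while simultaneously accumulating the geometric partial sums and carrying $t$ can be wired into a single network of the form \eqref{eq:nns} with all intermediate activations nonnegative, and that padding the two parallel branches to a common depth and appending the affine readout keeps every complexity parameter within a fixed multiple of $\log(1/\epsilon)$. A secondary technical point is that \eqref{eq:nns} applies a final ReLU whereas $\phi$ must be signed; this is handled by the standard device of carrying $\phi$ as the pair $(\sigma(\phi),\sigma(-\phi))$ of nonnegative outputs until the surrounding network forms the difference, which affects none of the stated bounds.
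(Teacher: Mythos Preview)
Your argument is correct. The paper itself does not prove this lemma: it is quoted directly from Yarotsky's work \cite{yarotsky}, and your construction is essentially the one given there, with the minor adaptation of using the polarization identity $4xy=(x+y)^{2}-(x-y)^{2}$ together with the even extension $Q(u)=M^{2}S_{n}(|u|/M)$ so as to cover the signed square $[-1,2]^{2}$; the vanishing on the axes then follows exactly as you observe from $|x+y|=|x-y|$ when $xy=0$. The remaining points you flag (wiring the accumulator and identity channels, padding to equal depth, carrying the signed output as $(\sigma(\phi),\sigma(-\phi))$) are routine and do not affect the $O(\log(1/\epsilon))$ complexity.
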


We are now ready to construct a structured deep ReLU network to approximate the true metric \(d_\rho\), defined by
\begin{equation}\label{eq:structured-NNs}
    d(x,x'):= F_a\Big(1 - 2\sum_{i = 1}^m \phi(h_i(x), h_i(x'))\Big),
\end{equation}
where  $F_a(t): = \frac{1}{a} \big(\sigma(t + a) - \sigma(t - a) -a\big)$ is a fixed two-layer ReLU network with \(a>0\). The value of \(a\) will be chosen later, see Theorems~\ref{approximation_error} and \ref{thm:excess} for details. Here, \(\phi\) is the deep ReLU network introduced in Lemma~\ref{approximate_multi} for approximating the product function \(xy\), and each \(h_i\) is a sub-network of the form \eqref{eq:nns}. We assume that all sub-networks \(h_i\), \(i=1,\ldots,m\), have the same depth. 

\smallskip

\noindent\textbf{Main idea of the construction.}
The main idea behind the construction of \eqref{eq:structured-NNs} is to approximate the true metric \(d_\rho\) component by component according to its representation $d_\rho(x, x')  =sgn(1 - 2 \sum_{i=1}^m p_i(x)p_i(x'))$ given in Theorem~\ref{true_predictor}. Specifically, we first construct \(m\) sub-networks \(h_i\), \(i=1,\ldots,m\), to approximate the conditional probabilities \(p_i\), respectively. Next, we introduce a deep ReLU network \(\phi(h_i(x),h_i(x'))\) to approximate the product \(h_i(x)h_i(x')\). By Lemma~\ref{approximate_multi}, \(\phi(h_i(x),h_i(x'))\) provides a reliable approximation of \(h_i(x)h_i(x')\). Finally, the output layer \(F_a(\cdot)\) is used to approximate the sign function, following the idea in \cite{zhou2024classification}. Indeed, \(F_a(\cdot)\) coincides with \(sgn(\cdot)\) on the interval \((-\infty,-a)\cup(a,\infty)\), while being linear on \([-a,a]\) so as to approximate the discontinuity of \(sgn(\cdot)\) near the origin. The resulting structure of \eqref{eq:structured-NNs} is illustrated in Figure~\ref{graph}.
\begin{figure}[h]
    \centering
    \includegraphics[scale=0.4]{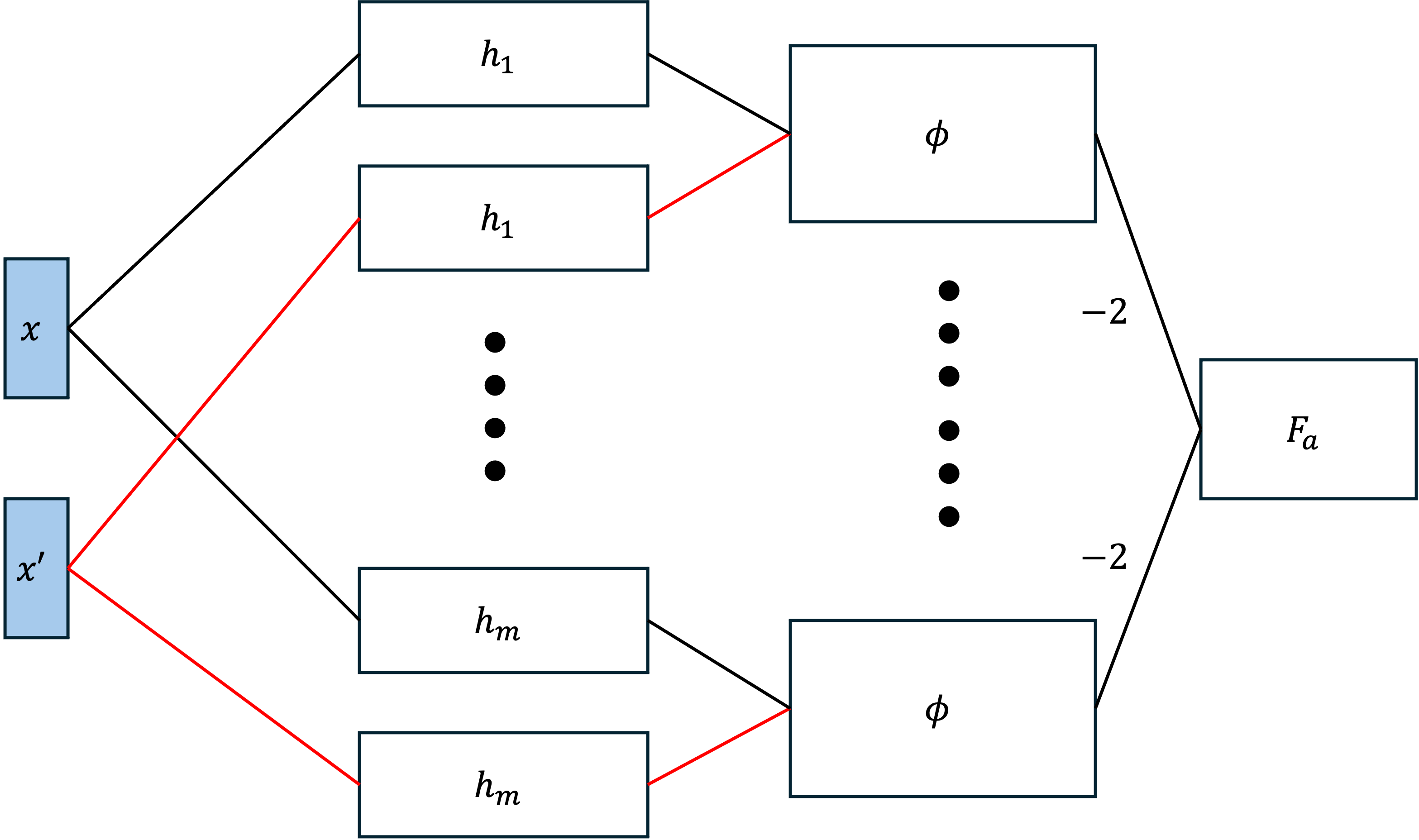}
    \caption{Structure of the proposed deep ReLU network \eqref{eq:structured-NNs} with inputs \(x,x'\in\X\).}
    \label{graph}
\end{figure}

\subsection{Error analysis}   In this subsection, we estimate the approximation error and the estimation error, and then derive the excess generalization error bound for the constructed deep ReLU networks.
We say that a network \(d\) has complexity \((L,W,U)\) if its depth, the number of possibly nonzero weights, and the number of computational units are \(L\), \(W\), and \(U\), respectively. The hypothesis space consisting of structured deep networks of the form \eqref{eq:structured-NNs} is defined by
\begin{align}\label{hypothesis_space}
        \calH = \calH(L,W,U) = \big\{d(x,x') \text{ of the form \eqref{eq:structured-NNs}}&:\text{the complexity of $d$ }\\
        &\text{ does not exceed $(L,W,U)$}\big\}.\nonumber
    \end{align}
Here, the complexity of \(d\) is obtained by summing the corresponding quantities over its sub-networks. Specifically, the depth of \(d\) is given by the sum of the depths of \(h_1\), \(\phi\), and \(F_a\) (recall that all \(h_i\) have the same depth). Moreover, the numbers of possibly nonzero weights and computational units of \(d\) are
\[
2\sum_{i=1}^m W_{h_i} + mW_\phi + W_{F_a} + c
\quad\text{and}\quad
2\sum_{i=1}^m U_{h_i} + mU_\phi + U_{F_a} + c,
\]
respectively, where \(W_\gamma\) and \(U_\gamma\) denote the corresponding quantities of the sub-network \(\gamma \in \{h_1,\ldots,h_m,\phi,F_a\}\), and \(c\) is an absolute constant. The capacity of the hypothesis space \(\calH\) is measured by \((L,W,U)\). As these parameters increase, the capacity of \(\calH\) becomes larger.

\smallskip
    
    The following theorem establishes approximation error bounds of the structured deep networks of the form \eqref{eq:structured-NNs}.     
    \begin{theorem}[Approximation error]\label{approximation_error}
		Suppose Assumptions \ref{smooth_regularity} and \ref{noise_condition} hold. For any $\epsilon \in(0, 1/2)$, if the hypothesis space $\calH$ defined in \eqref{hypothesis_space} has the depth $C_{p,r,m, \theta} \log(1/\epsilon)$ and the number of possibly nonzero weights and computation units $C_{p,r,m, \theta}\epsilon^{-\frac{p}{r(\theta + 1)}} \log(1/\epsilon)$, then there exists a deep ReLU network $d_\calH\in \calH$ of the form \eqref{eq:structured-NNs} with $a = C_\theta \epsilon^{\frac{1}{\theta+1}}$ such that
		$$\E(d_\calH) - \E(d_\rho) \le \epsilon.$$

	\end{theorem}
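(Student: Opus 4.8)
The plan is to bound $\E(d_\calH) - \E(d_\rho)$ by relating the excess risk to a weighted $L^1$-type discrepancy between $d_\calH$ and $d_\rho$, and then to control that discrepancy using the structured approximation in \eqref{eq:structured-NNs}. First I would record the pointwise comparison inequality for the hinge loss: from \eqref{eq:d_rho}, the inner conditional risk of any value $t$ equals $\eta\,(1+t)_+ + (1-\eta)(1-t)_+$, and subtracting its minimum (attained at $d_\rho(x,x')=\mathrm{sgn}(1-2\eta)$, with minimum value $2\min\{\eta,1-\eta\}$) gives a closed-form excess conditional risk. For $t\in[-1,1]$ this excess equals $|2\eta-1|\,|t - \mathrm{sgn}(1-2\eta)|/1$ up to the obvious piecewise-linear bookkeeping, so integrating over $\X\times\X$ yields
\begin{align*}
\E(d_\calH)-\E(d_\rho) \le C\int_{\X\times\X} |2\eta(x,x')-1|\,\big|d_\calH(x,x') - d_\rho(x,x')\big|\, d\rho_\X(x)d\rho_\X(x'),
\end{align*}
provided $d_\calH$ takes values in $[-1,1]$ (which $F_a$ guarantees). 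This is the standard route: the factor $|2\eta-1|$ is exactly what the Tsybakov condition (Assumption~\ref{noise_condition}) will exploit.

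Next I would construct the network. Using Lemma~\ref{approximate_smooth} with a target accuracy $\delta$ (to be calibrated against $\epsilon$ and $a$), pick sub-networks $h_i$ with $\|h_i - p_i\|_\infty \le \delta$, each of depth $C_{p,r}\log(1/\delta)$ and size $C_{p,r}\delta^{-p/r}\log(1/\delta)$; by the Remark, $h_i\in[-1,2]$, so Lemma~\ref{approximate_multi} applies and gives $\phi$ with $\|\phi(u,v)-uv\|_{L^\infty([-1,2]^2)}\le\delta$ and logarithmic complexity. A triangle-inequality estimate then shows
\begin{align*}
\Big| \Big(1 - 2\sum_{i=1}^m \phi(h_i(x),h_i(x'))\Big) - (1 - 2\eta(x,x')) \Big| \le 2m\delta + 2\sum_{i=1}^m \|h_i(x)h_i(x') - p_i(x)p_i(x')\|_\infty \le C_m \delta,
\end{align*}
since $|h_i(x)h_i(x') - p_i(x)p_i(x')| \le |h_i(x)||h_i(x')-p_i(x')| + |p_i(x')||h_i(x)-p_i(x)| \le (2+1)\delta$. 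Writing $g(x,x')$ for the inner affine-combination network and $g_\rho(x,x') = 1-2\eta(x,x')$, we have $\|g - g_\rho\|_\infty \le C_m\delta$. Now set $\delta$ so that $C_m\delta \le a/2$, i.e. $\delta \asymp a$. Then for any pair with $|g_\rho(x,x')| = |1-2\eta(x,x')| \ge 2a$ we get $|g(x,x')|\ge a$, hence $F_a(g(x,x')) = \mathrm{sgn}(g(x,x')) = \mathrm{sgn}(g_\rho(x,x')) = d_\rho(x,x')$; in other words $d_\calH$ and $d_\rho$ disagree only on the set $\{|2\eta-1| < 2a\}$, and there $|d_\calH - d_\rho| \le 2$.

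Plugging into the comparison inequality, the integral splits: on $\{|2\eta-1|\ge 2a\}$ the integrand vanishes, and on $\{|2\eta-1| < 2a\}$ we bound $|2\eta-1|\,|d_\calH - d_\rho| \le 2a\cdot 2 = 4a$, so
\begin{align*}
\E(d_\calH)-\E(d_\rho) \le C\cdot 4a\cdot Prob\{|\eta(X,X')-1/2| < a\} \le 4C\,C_\theta\, a^{\theta+1}
\end{align*}
by Assumption~\ref{noise_condition}. Choosing $a = C_\theta\,\epsilon^{1/(\theta+1)}$ (as in the statement) makes the right side at most $\epsilon$ after absorbing constants. Finally I would translate the accuracy budget back into complexity: $\delta \asymp a \asymp \epsilon^{1/(\theta+1)}$, so each $h_i$ has depth $C_{p,r}\log(1/\epsilon)$ and size $C_{p,r}\,\epsilon^{-p/(r(\theta+1))}\log(1/\epsilon)$; summing the $m$ sub-networks together with the logarithmic-size $\phi$ and the $O(1)$-size $F_a$ according to the complexity bookkeeping described after \eqref{hypothesis_space} gives total depth $C_{p,r,m,\theta}\log(1/\epsilon)$ and total size $C_{p,r,m,\theta}\,\epsilon^{-p/(r(\theta+1))}\log(1/\epsilon)$, matching the claimed hypothesis-space parameters.

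The main obstacle I anticipate is the first step — getting the clean pointwise excess-risk bound in terms of $|2\eta-1|\,|t - d_\rho|$ valid uniformly over $t\in[-1,1]$ — and, relatedly, making sure the calibration $\delta \asymp a$ is consistent: $a$ must be small enough that $C_m\delta \le a/2$ forces the sign agreement, yet $a$ cannot be taken smaller than $\epsilon^{1/(\theta+1)}$ or the complexity blows up. Everything else is routine triangle-inequality estimation plus the two cited approximation lemmas; the noise condition does the rest of the work in converting the "bad set" measure into the rate $a^{\theta+1}$.
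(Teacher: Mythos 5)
Your proposal is correct and follows essentially the same route as the paper's proof: the same pointwise excess-risk identity $|2\eta-1|\,|d - \mathrm{sgn}(1-2\eta)|$ (which in fact holds with equality since $|d|\le 1$ makes the hinge arguments nonnegative), the same three-stage network construction via Lemmas~\ref{approximate_smooth} and~\ref{approximate_multi} with the inner network within $O(\delta)$ of $1-2\eta$, the same split into the sets $\{|2\eta-1|>2a\}$ (where $F_a$ reproduces the sign exactly) and its complement (handled by Assumption~\ref{noise_condition}), and the same calibration $\delta\asymp a\asymp\epsilon^{1/(\theta+1)}$. The only cosmetic difference is that the paper allocates accuracy $\delta/(8m)$ to each sub-network so the total inner error is exactly $\delta=a$, whereas you carry a generic constant $C_m$ and rescale afterward.
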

	
	\begin{proof}
		Note $|\tau(y,y')| = 1$ and $|sgn(t)| \le 1$ for any $y,y'\in\Y$ and $t\in\R$. For any metric $d:[0,1]^p\times[0,1]^p\to\Y$ with $\|d\|_{L^\infty([0,1]^{2p})} \le 1$, the conditional excess risk can be written as
        \begin{align*}
            &\bE_{Y|X=x, Y'|X'=x'}\left[\ell(\tau(Y,Y')d(x,x')) - \ell(\tau(Y,Y')d_\rho(x,x'))\right]\\
            =\ & \bE_{Y|X=x, Y|X'=x'}\left[\big(1 + \tau(Y,Y')d(x,x')\big)_+ - \big(1 + \tau(Y,Y')sgn(1 - 2\eta(x,x'))\big)_+\right]\\
            =\ & \bE_{Y|X=x, Y|X'=x'}\left[\big(1 + \tau(Y,Y')d(x,x')\big) - \big(1 + \tau(Y,Y')sgn(1 - 2\eta(x,x'))\big)\right]\\
            =\ & \bE_{Y|X=x, Y|X'=x'}\left[\tau(Y,Y')\big(d(x,x') - sgn(1-2\eta(x,x'))\big)\right]\\
            =\ & (2\eta(x,x') - 1)(d(x,x') - sgn(1 - 2\eta(x,x'))\\
            =\ & |2\eta(x,x') - 1||d(x,x') - sgn(1 - 2\eta(x,x')|,
        \end{align*}
        where in the first equality we have used Theorem~\ref{true_predictor},  the second equality follows from the fact that the terms inside $(\cdot)_+$ are nonnegative,  and the last equality is obtained by discussing the sign of $2\eta - 1$ and the fact $\|d\|_{L^\infty([0,1]^{2p})} \le 1$.
        
        Therefore, for $\delta \in(0,1)$ and $d \in \calH$, the excess error
		\begin{align}\label{eq:approx-1}
			&\E(d) - \E(d_\rho)\nonumber\\ &= \bE_{X,X'}\left[\bE_{Y|X,Y'|X'}\left[ \ell(\tau(Y,Y')d(X,X')) - \ell(\tau(Y,Y')d_\rho(X,X'))\right]\right]\nonumber\\
            &= \bE_{X,X'}\left[|2\eta(X, X') - 1| \left|d(X, X') - sgn(1 - 2\eta(X, X'))\right|\right]\nonumber\\
			&=  \int_{|1 - 2\eta(x, x')| > 2\delta}|2\eta(x, x') - 1| \left|d(x, x') - sgn(1 - 2\eta(x, x'))\right| d\rho_X(x) d\rho_X(x')\nonumber \\
			&\quad  + \int_{|1 - 2\eta(x, x')| \le 2\delta}|2\eta(x, x') - 1| \left|d(x, x') - sgn(1 - 2\eta(x, x'))\right| d\rho_X(x) d\rho_X(x')\nonumber\\
            &\le \int_{|1 - 2\eta(x, x')| > 2\delta}|2\eta(x, x') - 1| \left|d(x, x') - sgn(1 - 2\eta(x, x'))\right| d\rho_X(x) d\rho_X(x')\nonumber \\
			&\quad + 4C_\theta \delta^{\theta+1},
		\end{align}
   where in the last step we have used Assumption~\ref{noise_condition} and the condition $|\eta-1/2| \le \delta$. 

  Let us analyze the first term in \eqref{eq:approx-1}. 
	Note that Lemma \ref{approximate_smooth} with $f=p_i$ implies that for each $p_i$ with approximation accuracy $\frac{\delta}{8m}$, there exists a ReLU network $h_i$ with depth at most $C_{p,r,m} \log(1/\delta)$ and the number of possibly nonzero weights and computation units at most $C_{p,r,m}\delta^{-\frac{p}{r}} \log(1/\delta)$. Since $p_i(x) \in [0,1]$ and $|h_i(x) - p_i(x)| \le \frac{\delta}{8m} < 1$ for any $x\in[0,1]^p$ and $i=1,\ldots,m$, we know $h_i(x) \in [-1, 2]$ for any $x\in[0,1]^p$ and $i=1,\ldots,m$.
 
 Let $\phi$ be the ReLU network in Lemma \ref{approximate_multi} with approximation accuracy $\frac{\delta}{8m}$, and $F_a(x) = \frac{1}{\delta}(\sigma(x+\delta) - \sigma(x - \delta) - \delta)$ with $a = \delta$. Consider constructing $d_\calH$ using the above sub-networks, \ie
    \begin{align*}
        d_\calH(x,x') = F_a\Big(1 - 2\sum_{i=1}^m \phi(h_i(x), h_i(x'))\Big).
    \end{align*}
    We know $d_\calH$ is a deep ReLU network from $[0,1]^p\times[0,1]^p$ to $[-1,1]$ with depth at most $C_{p,r,m} \log(1/\delta)$ and the number of possibly nonzero weights and computation units at most $C_{p,r,m}\delta^{-\frac{p}{r}} \log(1/\delta)$. 
    
    We claim that the sign of the term $1 - 2\sum_{i = 1}^m \phi(h_i(x), h_i(x'))$ inside $F_a$ coincides with $1 - 2\eta(x,x')$ if $|1 - 2\eta(x,x')| > 2\delta$, which can be proved by showing that $|(1 - 2\sum_{i = 1}^m \phi(h_i(x), h_i(x'))$ $ - (1- 2\eta(x,x'))| < \delta$. Because when $1 - 2\eta(x, x') > 2\delta$ or $1 - 2\eta(x, x') < -2\delta$, we must have $1 - 2\sum_{i = 1}^m \phi(h_i(x), h_i(x')) > \delta$ or $1 - 2\sum_{i = 1}^m \phi(h_i(x), h_i(x')) < -\delta$. Indeed, for any $x, x' \in [0,1]^p$, there holds
	\begin{align*}
		&\Big|\Big(1 - 2\sum_{i = 1}^m \phi(h_i(x), h_i(x'))\Big) - \left(1 - 2\eta(x,x')\right)\Big| \\
		&\le 2\sum_{i = 1}^m \Big|\phi(h_i(x), h_i(x')) - p_i(x)p_i(x')\Big| \\
		&\le 2\sum_{i = 1}^m \Big|\phi(h_i(x), h_i(x')) - h_i(x)h_i(x') \Big| + \Big|h_i(x)h_i(x') -  p_i(x)p_i(x')\Big| \\
		&\le 2\sum_{i = 1}^m \Big(\frac{\delta}{8m} + |h_i(x)| |h_i(x') - p_i(x')| + |p_i(x)| |h_i(x) - p_i(x)| \Big)\\
		&\le  2\sum_{i = 1}^m \Big(\frac{\delta}{8m} + \frac{\delta}{4m} + \frac{\delta}{8m} \Big)= \delta. 
	\end{align*}
    According to the definition of $F_a$ and recall that $a = \delta$, we know $F_a(t) = 1$ if $t>\delta$ and  $F_a(t) = -1$ if $t<-\delta$. Therefore, we conclude that $d_\calH(x, x') - sgn(1 - 2\eta(x, x')) = 0$ if $|1 - 2\eta(x,x')| > 2\delta$. Thus, the first term in \eqref{eq:approx-1} vanishes.
 
 Taking $d = d_\calH$ in the above approximation error bound, we last have
	\begin{align*}
			\E(d_\calH) - \E(d_\rho) \le 4C_\theta \delta^{\theta + 1}.
	\end{align*}
	By setting $\epsilon = 4C_\theta\delta^{\theta+1}$ and recall $a = \delta$, we get the desired results.
	\end{proof}

 \begin{remark}
 Theorem~\ref{approximation_error} implies that the approximation ability of \(d_{\calH}\) improves as the capacity of the hypothesis space \(\calH\) increases. However, as the hypothesis space becomes larger, the model may become overly flexible, leading to a larger estimation error. This indicates a trade-off between the approximation error and the estimation error. We will choose a proper hypothesis space to obtain the explicit excess generalization error rate (see Theorem~\ref{thm:excess}). 
 \end{remark}

 To derive upper bounds for the estimation error, we measure the capacity of the hypothesis space by its pseudo-dimension. The pseudo-dimension is defined as the VC-dimension of the corresponding subgraph class. A comprehensive definition of VC-dimension can be found in \cite{gyorfi,HDS,zhangTong_ml}. 
    \begin{definition}
        Let $\F$ be a class of functions from $[0,1]^p$ to $\R$,  and $\F^+:= \{(x,t)\in[0,1]^p\times\R: f(x) > t, f \in \F\}$ be the corresponding sub-graph set. The pseudo-dimension $Pdim(\F)$ of $\F$ is defined as
		\begin{center}
			$Pdim(\F):= VC(\F^+)$,
		\end{center}
		where $VC(\F^+)$ is the VC-dimension of $\F^+$. Furthermore, if $Pdim(\F) < \infty$, then we call $\F$ a VC-class.
    \end{definition}
    If we solely apply the uniform boundedness (first-order condition) of the hypothesis space, the estimation error bound is of the order $O(1/\sqrt{n})$ \cite{cao, ranking}, which is often quite loose. To derive a tighter upper bound of the estimation error, the variance condition (second-order condition) should be taken into consideration.
    \begin{definition}\label{variance-expectation}
    Let $\beta\in(0,1]$ and $M>0$, and $\F \subset L^2(\X\times\X, \rho\times\rho)$ is a function class with nonnegative first order moment, \ie for any $f\in\F$, $\bE[f] \ge 0$.
 We say $\F$ has a variance-expectation bound with parameter pair $(\beta, M)$, if for any $f \in \F$,
		$$\mathbb{E}[f^2] \le M(\mathbb{E}[f])^\beta.$$
	\end{definition}
    \begin{definition}
    Let $\Omega \subset \R^p$. 
        We say a function $g:\Omega \to\R$ is Lipschitz continuous with Lipschitz constant $\alpha>0$ if for any $x, y \in \Omega$,
        \begin{align*}
            |g(x) - g(y)| \le \alpha \|x - y\|_2,
        \end{align*}
        where $\|\cdot\|_2$ is the Euclidean norm.
    \end{definition}

    The following lemma is a specialization of part~(a) of Theorem~1 in \cite{zhou2023fine} to our metric learning setting. 
    Indeed, the symmetry of the loss function (III.2) in \cite{zhou2023fine} is satisfied in our setting, and Assumption~3 in \cite{zhou2023fine} follows from the standard entropy bound for a uniformly bounded hypothesis class of finite pseudo-dimension (see \cite{VC} for more details). 
By applying Theorem~1(a) in \cite{zhou2023fine} with these observations and absorbing the resulting constants and logarithmic factors into the generic constant, we obtain the following estimate.
   \begin{lemma}\label{estimation_error}
		 Let $V= Pdim(\calH)$ be the pseudo-dimension of the hypothesis space $\calH$, and $\ell(\tau(y,y')d(x,x'))$ be the loss function for $(z,z')\in\Z\times\Z$, where $d$ is a predictor from $\X\times\X$ to $\R$. Suppose the following conditions hold for some $\alpha > 0$, $s>0$, $\beta\in(0,1]$ and $M > 0$.
        \begin{itemize}
            \item for any $y,y' \in \Y$, the loss function $\ell(\tau(y,y') \cdot)$ is Lipschitz continuous with Lipschitz constant $\alpha$,  
            \item for any $d\in\calH\cup\{d_\rho\}$ and for almost every sample pair $(z,z')\in\Z\times\Z$, there holds $\ell(\tau(y,y')d(x,x')) = \ell(\tau(y',y)d(x',x))$ and $\|d\|_{L^\infty(\X\times\X)} \le s \in \R^+$,
            \item  the shifted hypothesis space $\{\ell(\tau(y,y')d(x,x')) - \ell(\tau(y,y')d_\rho(x,x')): d \in \calH\}$ has a variance-expectation bound with parameter pair $(\beta, M)$,
        \end{itemize}
       then for any $\delta \in (0, 1/2)$, with probability at least $1 - \delta$, there holds
		\begin{align*}
			\E(\hat{d}_z) - \E(d_\rho) \le  C_{s,\alpha,M,\beta} \Big(\dfrac{V\log(n)\log^2(4/\delta)}{n}\Big)^{\frac{1}{2-\beta}} + 2(1+\beta)\Big(\E(d_\calH) - \E(d_\rho)\Big).
		\end{align*}
	\end{lemma}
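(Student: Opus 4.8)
The plan is to center the analysis at $d_\calH$, reduce the pairwise (U-statistic) empirical risk to an average of i.i.d. terms, and then run a local, peeling-type uniform-deviation argument that feeds on the variance--expectation bound so that the leading term carries the fast exponent $1/(2-\beta)$; this is essentially the route of \cite{ZHOU_pairwise}. Concretely, write $g_d := \ell(\tau(\cdot,\cdot)d(\cdot,\cdot)) - \ell(\tau(\cdot,\cdot)d_\rho(\cdot,\cdot))$ for $d\in\calH$, so that $\E(d)-\E(d_\rho)=\bE[g_d]$ and $\E_S(d)-\E_S(d_\rho)=\widehat{\bE}[g_d]$, the empirical U-statistic mean of $g_d$; by the second hypothesis $g_d$ is a symmetric kernel. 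Since $\hat{d}_z$ minimizes $\E_S$ over $\calH$,
\[
\bE[g_{\hat{d}_z}] \;\le\; \sup_{d\in\calH}\big(\bE[g_d]-\widehat{\bE}[g_d]\big) \;+\; \widehat{\bE}[g_{d_\calH}].
\]
The last term is a single U-statistic mean: a one-sided Bernstein inequality for U-statistics together with $\bE[g_{d_\calH}^2]\le M(\bE[g_{d_\calH}])^\beta$ and Young's inequality gives, with probability at least $1-\delta/2$, $\widehat{\bE}[g_{d_\calH}]\le(1+\beta)\big(\E(d_\calH)-\E(d_\rho)\big)+C_{s,\alpha,M,\beta}(\log(4/\delta)/n)^{1/(2-\beta)}$, which after the final rescaling yields the $2(1+\beta)$ factor.

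It then remains to control $\sup_{d\in\calH}(\bE[g_d]-\widehat{\bE}[g_d])$. First I would decouple the U-statistic via Hoeffding's averaging identity, writing $\widehat{\bE}[g_d]$ as an average over permutations of $\lfloor n/2\rfloor$-term i.i.d. block sums, so that by Jensen it suffices to bound the ordinary empirical process of the i.i.d. pairs at sample size $\lfloor n/2\rfloor$ (the same reduction used for ranking, cf. \cite{ranking}). For the capacity input, since $\tau(y,y')\in\{-1,1\}$ and $\ell(\pm\cdot)$ is $\alpha$-Lipschitz, composition with $\ell$ turns a sup-norm $(\varepsilon/\alpha)$-cover of $\calH$ into an $\varepsilon$-cover of $\{g_d:d\in\calH\}$; combined with $Pdim(\calH)=V$ and $\|d\|_\infty\le s$ this gives $\log\mathcal{N}(\varepsilon,\{g_d:d\in\calH\},\|\cdot\|_\infty)\lesssim V\log(C_{s,\alpha}/\varepsilon)$ and an envelope of order $\alpha s$.

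Next I would peel: split $\calH$ into shells $\calH_j=\{d:2^{j-1}r<\bE[g_d]\le 2^jr\}$, apply Talagrand's inequality (or a Bernstein-type bound) to the block empirical process on each $\calH_j$ using the variance proxy $\bE[g_d^2]\le M(2^jr)^\beta$ supplied by the variance--expectation bound together with the entropy estimate above, and sum the failure probabilities $\sum_j\delta2^{-j}\le\delta/2$. Choosing $r$ as the fixed point $r\asymp C_{s,\alpha,M,\beta}\big(\tfrac{V\log n\,\log^2(4/\delta)}{n}\big)^{1/(2-\beta)}$ gives, on the good event, $\bE[g_d]-\widehat{\bE}[g_d]\le\tfrac12\bE[g_d]+\beta r$ for all $d\in\calH$. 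Inserting $d=\hat{d}_z$ into the displayed inequality and solving $\tfrac12\bE[g_{\hat{d}_z}]\le(1+\beta)\big(\E(d_\calH)-\E(d_\rho)\big)+C_{s,\alpha,M,\beta}r$ produces exactly the claimed bound.

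The hard part will be the peeling step: arranging the local-complexity estimate so that the variance proxy $(2^jr)^\beta$, rather than a crude envelope bound, governs the leading $\sqrt{\cdot}$ term — this is precisely what upgrades the naive $O(1/\sqrt n)$ into the $O\big((V\log n/n)^{1/(2-\beta)}\big)$ rate — while keeping the logarithmic overhead at $\log n\cdot\log^2(4/\delta)$. One must also verify that the fully degenerate remainder of the Hoeffding decomposition is genuinely of order $O(1/n)$ and hence absorbed into the stated rate; that is routine but is exactly where the pairwise setting departs from the classical i.i.d. ERM analysis. The boundedness hypothesis is what keeps the envelopes and Bernstein remainders finite, and the symmetry hypothesis is what makes $\E_S(d)-\E_S(d_\rho)$ a legitimate symmetric-kernel U-statistic so that the Hoeffding reduction applies.
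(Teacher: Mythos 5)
The paper does not actually prove this lemma: it is imported verbatim from \cite{ZHOU_pairwise}, so there is no in-paper argument to compare against. Your sketch follows what is almost certainly the intended route --- decompose around $d_\calH$, reduce the $U$-statistic to an i.i.d.\ empirical process via Hoeffding's averaging identity and Jensen, and run a localized (peeling/fixed-point) analysis in which the variance--expectation bound supplies the variance proxy and the pseudo-dimension supplies the entropy --- and the structure is sound: the decomposition, the one-sided Bernstein treatment of $\widehat{\bE}[g_{d_\calH}]$ with Young's inequality producing the $(1+\beta)$ inflation, and the fixed point $r\asymp (V\log n\,\log^2(4/\delta)/n)^{1/(2-\beta)}$ all assemble into the stated bound. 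Two points need repair, though neither is structural. First, your capacity claim is wrong as written: finite pseudo-dimension does \emph{not} control sup-norm covering numbers of $\calH$ over the whole domain (a VC class of indicators already has infinite $\|\cdot\|_\infty$-covering number at small radius); what $Pdim(\calH)=V$ and the envelope $\alpha s$ give you, via Haussler's bound, is the \emph{empirical} $L^1$ (or $L^2$) covering number $\log\mathcal{N}(\varepsilon,\{g_d\},L^1(\mu_k))\lesssim V\log(C_{s,\alpha}/\varepsilon)$, and that is what the symmetrized, localized Talagrand/Bernstein argument actually consumes. Second, your closing worry about the ``fully degenerate remainder of the Hoeffding decomposition'' is moot on the route you chose: the averaging-identity-plus-Jensen reduction bounds the moment generating function of the full $U$-process supremum by that of the block empirical process directly, so no degenerate term ever appears; you would only need to control it if you instead used the Hoeffding \emph{decomposition}. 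As it stands the proposal is a correct plan rather than a proof --- the peeling step and the fixed-point computation are described but not executed --- but the missing work is routine.
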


    The first two conditions in this lemma are easy to verify. Indeed, the Lipschitz continuity of the loss $\ell$ with Lipschitz constant $\alpha=1$ simply follows from the Lipschitz continuity of the hinge loss.  It is evident that the hypothesis functions and the true metric are uniformly bounded by $s = 1$, and the symmetry of the loss $\ell$ is obtained by the symmetries of the reducing function $\tau$, the hypothesis functions $d$ and the true metric $d_\rho$ with respect to $y,y'$ and $x,x'$ respectively.  All that remains is to check whether the shifted hypothesis space has a variance-expectation bound. The following proposition shows that this bound can be established under Tsybakov's noise condition.

	\begin{proposition}[Variance-expectation bound]\label{variance-expectation_prop}
		Suppose Assumption \ref{noise_condition} holds. For any $d \in \calH$, the shifted hypothesis space $\{\ell(\tau(y,y')d(x,x')) - \ell(\tau(y,y')d_\rho(x,x')):d \in \calH\}$ has a variance-expectation bound with parameter pair $\Big(\frac{\theta}{\theta + 1}, 2^{\frac{3}{\theta+1}} C_\theta^{\frac{1}{\theta + 1}}\Big)$. 
	\end{proposition}
	\begin{proof}
		As in the proof of Theorem \ref{approximation_error}, for any $d \in \calH$, we can show
		\begin{align*}
			\E(d) - \E(d_\rho) &= \bE_{X,X'}\left[|2\eta(X, X') - 1| \left|d(X, X') - sgn(1 - 2\eta(X, X'))\right|\right]\\
			&= \int_{\X\times\X}|2\eta(x, x') - 1| \left|d(x, x') - sgn(1 - 2\eta(x, x'))\right| d\rho_X(x) d\rho_X(x').
		\end{align*}
	    Let $q_d(z,z'):= \ell(\tau(y,y')d(x,x')) - \ell(\tau(y,y')d_\rho(x,x'))$. Note that $\tau(y,y')d(x,x') \in [-1,1]$ for any $z,z'\in\Z$ and $d\in\calH\cup\{d_\rho\}$. Then there holds $\ell(\tau(y,y')d(x,x')) = 1 + \tau(y,y')d(x,x')$. Since $|\tau(y,y')|^2 = 1$, we have
		\begin{align*}
			&\bE_{Z,Z'}[q_d^2(Z,Z')]\\
   &= \bE_{X,X'}[|d(X,X') - d_\rho(X,X')|^2]\\
            &= \int_{\X\times\X} \left|d(x, x') - sgn(1 - 2\eta(x, x'))\right|^2 d\rho_X(x) d\rho_X(x')\\
			&= \int_{|2\eta(x,x') - 1| > t} \left|d(x, x') - sgn(1 - 2\eta(x, x'))\right|^2 d\rho_X(x) d\rho_X(x')\\
			&\ \ \ + \int_{|2\eta(x,x') - 1| \le t} \left|d(x, x') - sgn(1 - 2\eta(x, x'))\right|^2 d\rho_X(x) d\rho_X(x')\\
			&\le \int_{|2\eta(x,x') - 1| > t} \left|d(x, x') - sgn(1 - 2\eta(x, x'))\right| \frac{|2\eta(x,x') - 1|}{t} d\rho_X(x) d\rho_X(x')\\
			&\ \ \ + 4Prob\{|\eta(X, X') - 1/2| \le t/2\}\\
			&\le \frac{\E(d) - \E(d_\rho)}{t} + 4C_\theta \big(\frac{t}{2}\big)^\theta,
		\end{align*}
	where $t>0$, and in the last inequality we have used Tsybakov's noise condition directly. 
 
 By choosing $t = \Big(\frac{\E(d) - \E(d_\rho)}{2^{2-\theta}C_\theta}\Big)^{\frac{1}{\theta + 1}}$, we obtain
 $$\bE_{Z,Z'}[q_d^2(Z,Z')] \le 2^{\frac{3}{\theta +1}} C_\theta^{\frac{1}{\theta + 1}} \Big(\E(d) - \E(d_\rho)\Big)^{\frac{\theta}{\theta + 1}},$$
 which completes the proof. 
	\end{proof}\jd

    The estimation error is estimated in the following theorem by combining Lemma~\ref{estimation_error} and Proposition~\ref{variance-expectation_prop} together.
    \begin{theorem}[Estimation error]\label{thm:estimationerror}
    Let $V= Pdim(\calH)$ be the pseudo-dimension of the hypothesis space $\calH$ defined in \eqref{hypothesis_space}. 
    For any $\delta\in(0,1/2)$, with probability at least $1-\delta$, there holds
    \begin{align*}
			\E(\hat{d}_z) - \E(d_\rho) \le  C_{\theta} \Big(\dfrac{V\log(n)\log^2(4/\delta)}{n}\Big)^{\frac{\theta + 1}{\theta + 2}} + \frac{4\theta + 2}{\theta + 1}\Big(\E(d_\calH) - \E(d_\rho)\Big).
		\end{align*}
    \end{theorem}

 Now, we can obtain an excess generalization error bound by combining the approximation error bound (Theorem~\ref{approximation_error}) and the estimation error bound (Theorem~\ref{thm:estimationerror}). 
 An explicit excess generalization error rate can be further established by carefully trading off the estimation error and the approximation error. 
    
	\begin{theorem}[Excess generalization error]\label{thm:excess}
           Suppose Assumptions \ref{smooth_regularity} and \ref{noise_condition} hold and let $L \in \N$. Consider the hypothesis space $\calH = \calH(L,W,U)$ defined in \eqref{hypothesis_space} with $W = U = \lceil C_{p,r,m,\theta} \exp\{L\}\rceil$. For any $\delta \in (0,1/2)$, with probability at least $1 - \delta$, there holds
		$$\E(\hat{d}_z) - \E(d_\rho) \le C_{p,r,m,\theta} \log^2(4/\delta)\bigg\{\Big(\frac{L^2\exp\{L\}\log n}{n}\Big)^{\frac{\theta + 1}{\theta + 2}}  + \Big(\frac{L}{\exp\{L\}}\Big)^{\frac{(\theta + 1)r}{p}} \bigg\}.$$
		By setting $L = \left\lceil\frac{p}{p + (\theta + 2) r}\log\Big(\dfrac{n}{\log n}\Big)\right\rceil$, we have
		$$\E(\hat{d}_z) - \E(d_\rho) \le C_{p,r,m,\theta} \log^2(4/\delta) \log^4(n) n^{-\frac{(\theta + 1) r}{p + (\theta + 2)r}}.$$
	\end{theorem}
	\begin{proof}
            According to Theorem 7 in \cite{VC}, we know $V \le CLW\log U$, where $W$ and $U$ are the number of possibly nonzero weights and computation units, respectively. By setting $W = U = \lceil C_{p,r,m,\theta}\exp\{L\}\rceil$ and applying Theorem \ref{approximation_error} with $\epsilon = C_{p,r,m,\theta}(\frac{L}{\exp\{L\}})^{\frac{r(\theta+1)}{p}}$ and $a = C_{p,r,m,\theta}(\frac{L}{\exp\{L\}})^{\frac{r}{p}}$, the approximation error can be estimated as \[\E(d_\calH) - \E(d_\rho) \le C_{p,r,m,\theta} \Big(\frac{L}{\exp\{L\}}\Big)^{\frac{(\theta + 1)r}{p}}.\]  
            According to Theorem \ref{thm:estimationerror} and applying the above two error bounds of the pseudo-dimension and the approximation error, we have, with probability at least $1 - \delta$,
            $$\E(\hat{d}_z) - \E(d_\rho) \le C_{p,r,m,\theta} \log^2(4/\delta)\bigg\{\Big(\frac{L^2\exp\{L\}\log n}{n}\Big)^{\frac{\theta + 1}{\theta + 2}}  + \Big(\frac{L}{\exp\{L\}}\Big)^{\frac{(\theta + 1)r}{p}} \bigg\}.$$
            The proof is completed by setting $L = \left\lceil\frac{p}{p + (\theta + 2) r}\log\Big(\dfrac{n}{\log n}\Big)\right\rceil$. 
	\end{proof}
    Theorem~\ref{thm:excess} shows that the excess generalization error bound is of order (up to a logrithmic term) $O(n^{-\frac{(\theta + 1) r}{p + (\theta + 2)r}})$. This rate is closely related to the dimension of the input space, the parameter $\theta$ in the noise condition, and the smoothness $r$ of the conditional probabilities. When the distribution $\rho$ has very low noise and the corresponding conditional probabilities are rather smooth, \ie parameters $\theta$ and $r$ are very large, then the learning rate can be of order $O(n^{-1+\epsilon})$ with a small $\epsilon > 0$.
\begin{remark}
    It is worth emphasizing that previous works \cite{cao,guo2014guaranteed,deep2019huai,jin2009,fast2019} on the study of generalization analysis for metric and similarity learning only derived the estimation error bounds. For instance, \cite{cao} established the upper bound for the estimation error of the order $O(\frac{1}{\sqrt{n}})$ for metric learning with the hinge loss, where they focused on learning the Mahalanobis distance. \cite{fast2019} showed that the convergence rate of the estimation error can achieve $O(\frac{1}{n})$ for metric learning with the smooth loss function and strongly convex objective. \cite{jin2009} established estimation error bounds of order $O(\frac{1}{\sqrt{n}})$ via the algorithmic stability for metric learning.
\end{remark}
\begin{remark}
   The assumptions used in the above theorem are tailored to the present proof technique in this paper.
    In particular, the proof of Theorem~\ref{approximation_error} relies on a uniform approximation of the conditional probability function $\eta(x,x')$ together with the outer approximation $F_a$ of the sign function.
    An $L^2$-type regularity assumption considered in \cite{xiang2009classification} may be more natural under additional regularity of the marginal distribution of $X$, for example, when $\rho_X$ has a uniformly bounded density with respect to the Lebesgue measure.
    However, such a condition is not immediate within our framework, since the present argument relies on uniform approximation in order to preserve the sign away from the decision boundary.
    Similarly, alternative low-noise assumptions, such as geometric noise conditions introduced in \cite{steinwart2007fast}, would require a separate formulation on $\mathcal X\times\mathcal X$ and a substantially different analysis. We leave these questions for future work. 
\end{remark} 
\section{Structural properties of the true metric under general losses}\label{ssec:regularity}
In this section, we investigate structural properties of the problem setting and the true metric for metric and similarity learning under a general loss function.
Specifically, we first show that the bias term $b$ is not intrinsic to the theoretical characterization of the true metric $d_\rho(x,x')$, and to assume that the output space \(\mathcal{Y}\) consists of finitely many labels, as we do throughout the paper. We then prove that the true metric is symmetric for almost all \(x,x'\in\X\), which provides further justification for the use of symmetric models such as the Mahalanobis distance in metric learning. Finally, we show that, for convex, nonnegative, and nondecreasing losses, the true metric between two identical samples is always less than or equal to that between two different samples.

\smallskip

\noindent \textbf{Bias term at the population level.}
Unlike many existing works \cite{cao,deep2019huai,jin2009,lei2016generalization,revisiting2020,fast2019}, we do not introduce a bias term \(b>0\) in the loss function \(\ell\). We show that, at the population level, this term only induces a translation of the true metric, and is therefore not essential for its theoretical characterization.

Denote by \[\E_b(d) = \bE_{Z,Z'}\left[\ell(\tau(Y,Y')(d(X,X') - b)\right]\] the generalization error of \(d\) with bias term \(b>0\). Let $\tilde{d}_\rho := \arg\min_{d\in\F}\E_b(d)$ be the corresponding true metric under this biased loss. Analogously to \eqref{eq:d_rho}, we have
\begin{align*}
    \tilde{d}_\rho(x,x')
    &= \arg\min_{t \in \R} \eta(x,x')\ell(t - b) + (1 - \eta(x,x'))\ell(b - t)\\
    &= b + \arg\min_{s \in \R} \eta(x,x')\ell(s) + (1 - \eta(x,x'))\ell(-s)\\
    &= b + d_\rho(x,x')
\end{align*}
for almost every pair \(x,x'\in\X\). This shows that the bias term \(b\) merely induces a constant shift and is therefore inessential in the theoretical characterization. Hence, throughout the paper, we set \(b=0\).

\smallskip

\noindent \textbf{The output space is finite.}
In almost all theoretical and empirical works on metric and similarity learning \cite{bar2005learning,cao,davis2007information,guo2014guaranteed,jin2009,kar2011similarity,lei2016generalization,fast2019}, the label space \(\Y\) is assumed to be finite, and often even binary, \ie \(\Y=\{+1,-1\}\). A finite label space corresponds to a discrete distribution of \(Y\). This naturally raises the question of what happens when the distribution of \(Y\) is continuous. The following proposition answers this question.

\begin{proposition}\label{discrete_distr}
If the distribution of \(Y\) is continuous, then the true metric is almost surely equal to a generalized constant \(c\in[-\infty,+\infty]\).
\end{proposition}

\begin{proof}
For almost every pair \(x,x'\in\X\), it follows from \eqref{eq:d_rho} that
\begin{align}\label{true_metric_opt}
    d_\rho(x,x')=\arg\min_{t\in\R}\eta(x,x')\ell(t)+(1-\eta(x,x'))\ell(-t).
\end{align}
Since the distribution of \(Y\) is continuous, we have \(Prob\{Y=Y'\}=0\). Hence, \(\eta(x,x')=0\) for almost every pair \(x,x'\in\X\). Combining this with \eqref{true_metric_opt}, we obtain
\begin{align*}
    d_\rho(x,x')=\arg\min_{t\in\R}\ell(-t).
\end{align*}
Therefore, \(d_\rho(x,x')\) is almost surely equal to a generalized constant \(c\in[-\infty,+\infty]\). This completes the proof.
\end{proof}

Proposition~\ref{discrete_distr} shows that, when the distribution of \(Y\) is continuous, one has \(d_\rho(x,x')=c\in[-\infty,+\infty]\) for almost every pair \(x,x'\in\X\). In other words, the distances or similarities between almost all sample pairs are identical. Therefore, it is natural to focus on the case where the label space \(\Y\) is finite.

\begin{remark}
The above proposition suggests that, for a general distribution of \(Y\), the continuous part plays no essential role in determining the true metric, since the conditional probability \(\eta\) depends only on the discrete part of the distribution. Moreover, this phenomenon is also tied to the choice of the reducing function \(\tau\). Indeed, we assume that \(\tau(y,y')=1\) only when \(y=y'\), an event that has probability zero when \(Y\) is continuously distributed. Under this setting, the labels therefore play essentially no role in the learning problem.
\end{remark}

     \smallskip

\noindent \textbf{Regularities of the true metric.}
In contrast to predictors in ranking problems \cite{agarwal,ranking,huang2023generalization}, a metric \(d\) is intended to reflect the distance or similarity between a given pair of objects, rather than a rank or an ordering. Therefore, it is natural to require that \(d\) be independent of the order of the sample pair \((x,x')\), that is, \(d(x,x')=d(x',x)\). The following proposition shows that the true metric \(d_\rho\) indeed satisfies this symmetry property. This provides theoretical justification for constructing hypothesis spaces using symmetric models such as the Mahalanobis distance \((x-x')^\top M(x-x')\) or the pairwise similarity function \(x^\top Mx'\).
	\begin{proposition}
		The true metric $d_\rho$ is symmetric, \ie almost surely we have
		$$d_\rho(x, x') = d_\rho(x', x).$$
	\end{proposition}
	\begin{proof}
		By \eqref{eq:d_rho}, the proof directly follows from the symmetry of $\eta(x,x')$.
	\end{proof}

In mathematical terms, the distance between any two identical points is defined to be zero. In particular, the distance $d(\phi(x),\phi(x'))=\|\phi(x)-\phi(x')\|_2$ used in deep metric learning \cite{deep2019huai,deepsurvey2019,revisiting2020} and the Mahalanobis distance $d(x,x')=(x-x')^\top M(x-x')$ used in traditional distance metric learning \cite{cao,fast2019} both vanish when \(x=x'\). Consequently, for all \(x,x'\in\X\), one has \(d(x,x)\le d(x,x')\). However, this property may not hold for the true metric.

For example, consider the true metric associated with the hinge loss; see Theorem~\ref{true_predictor}. Let the number of labels be \(m=3\), and suppose that $P_x= [3/5, 1/5, 1/5]$ and $P_{x'} = [1, 0, 0]$. 
Then $\eta(x,x) = \langle P_x, P_x\rangle = 11/25 < 3/5 = \langle P_x, P_{x'}\rangle = \eta(x,x')$ 
and hence
\[
d_\rho(x,x)=sgn(1-2\eta(x,x))=1
> -1
= sgn(1-2\eta(x,x'))
= d_\rho(x,x').
\]
This shows that in general, the true metric need not satisfy the intuitive property $d_\rho(x,x)\le d_\rho(x,x') $ for all $ x,x'\in\X.$
Moreover, this behavior is determined by the underlying distribution $\rho$. It is therefore natural to seek sufficient conditions on $\rho$ under which $d_\rho(x,x)\le d_\rho(x,x') \text{ for all } x,x'\in\X.$

Note the minimizer in \eqref{eq:d_rho} may not be unique, we first introduce the following definition. 
    \begin{definition}\label{def:minimizer}
        For any $x, x'\in\X$ and $a\in[0,1]$, let $t^*(a) := \inf\{s\in\R| s \in \arg\min_{t\in\R} a\ell(t)$ $+(1-a)\ell(-t)\}$ be the infimum of all the minimizers of problem \eqref{eq:d_rho} with conditional probability $a$. We define $d_\rho(x,x') = t^*(\eta(x,x'))$. Then $d_\rho(x,x) \le d_\rho(x, x')$ if $t^*(\eta(x,x)) \le t^*(\eta(x,x'))$.
    \end{definition}
    \begin{remark}
        For many losses like the modified least squares $\ell(-t) = \max\{1-t,0\}^2$, the hinge loss $\ell(-t) = \max\{1-t, 0\}$, the exponential loss $\ell(-t) = \exp(-t)$, and the logistic regression loss $\ell(-t) = \ln(1 + \exp(-t))$, the minimizer $d_\rho(x,x')$ is unique for  $x,x'\in \X$ \cite{zhang04statistical}.  Definition~\ref{def:minimizer} defines $d_\rho(x,x')$ as the minimum of all the minimizers, and the comparison is for the minimum values of $d_\rho(x,x)$ and $d_\rho(x,x')$. We do not make the assumption of the uniqueness of the minimizer here. 
    \end{remark}
    \begin{assumption}\label{loss}
        The loss function $\ell$ is convex, non-decreasing, and non-negative.
    \end{assumption}
    \begin{remark}
        This assumption is natural in our setting because we often use a convex loss to implement algorithms efficiently.
        For any sample pair $(x,y),(x',y')$, the distance or the similarity $d(x,x')$ is supposed to be small when $\tau(y,y') = 1$, then we expect the loss $\ell(d(x,x'))$ to increase as $d(x,x')$ increases. For the case of $\tau(y,y') = -1$, the monotonicity of $\ell$ can be discussed similarly.
    \end{remark}

    According to \eqref{eq:d_rho}, once the loss \(\ell\) is fixed, the true metric \(d_\rho(x,x')\) is completely determined by the conditional probability \(\eta(x,x')\). It is therefore natural to investigate conditions on \(\eta(x,x')\) under which the property $d_\rho(x,x)\le d_\rho(x,x')$ holds. As shown in \eqref{eta}, when \(\eta(x,x')=Prob\{Y=Y'\mid X=x,X'=x'\}\) is large, the samples \(x\) and \(x'\) are more likely to belong to the same class. In that case, one expects the distance or dissimilarity between them to be small, and hence it is natural to expect that \(d_\rho(x,x')\ge d_\rho(x,x)\) whenever the corresponding conditional probabilities satisfy \(\eta(x,x')\le \eta(x,x)\).

For $\eta\in[0,1]$ and $t\in\R$, we define
        \begin{align}\label{Q_form}
            Q(\eta, t) = \eta\ell(t) + (1 - \eta)\ell(-t).
        \end{align}
    It's obvious that $t^*(\eta) \in \arg\min_{t\in\R} Q(\eta,t)$.
     The following lemma shows that $t^*(\eta)$ is non-increasing on $[0,1]$, which is the key step to prove $d_\rho(x,x) \le d_\rho(x,x')$ for the general case.
    \begin{lemma}\label{lemma_regularity}
        Suppose Assumption \ref{loss} holds. If $0\le \eta_1 \le \eta_2 \le 1$, then $t^*(\eta_2) \le t^*(\eta_1)$. 
    \end{lemma}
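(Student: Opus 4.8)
The plan is to show that the infimum minimizer $t^*(\eta)$ of $Q(\eta,t) = \eta\ell(t) + (1-\eta)\ell(-t)$ is non-increasing in $\eta$ by a standard monotone-comparative-statics / single-crossing argument. Fix $0 \le \eta_1 \le \eta_2 \le 1$ and write $t_1 = t^*(\eta_1)$, $t_2 = t^*(\eta_2)$. Suppose for contradiction that $t_2 > t_1$. The idea is to compare the values of $Q(\eta_i, \cdot)$ at $t_1$ and $t_2$ and derive an inconsistency with the fact that each $t_i$ minimizes $Q(\eta_i,\cdot)$. Concretely, minimality at $t_2$ for $\eta_2$ gives $Q(\eta_2, t_2) \le Q(\eta_2, t_1)$, and minimality at $t_1$ for $\eta_1$ gives $Q(\eta_1, t_1) \le Q(\eta_1, t_2)$. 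Adding these two inequalities, the $\ell(t)$ and $\ell(-t)$ terms rearrange into
\[
(\eta_2 - \eta_1)\big(\ell(t_2) - \ell(t_1)\big) + (\eta_1 - \eta_2)\big(\ell(-t_2) - \ell(-t_1)\big) \le 0,
\]
i.e. $(\eta_2 - \eta_1)\big[(\ell(t_2) - \ell(t_1)) - (\ell(-t_2) - \ell(-t_1))\big] \le 0$. Since $\eta_2 - \eta_1 \ge 0$, this forces $\ell(t_2) - \ell(t_1) \le \ell(-t_2) - \ell(-t_1)$. But $t_2 > t_1$ together with $\ell$ non-decreasing gives $\ell(t_2) - \ell(t_1) \ge 0 \ge \ell(-t_2) - \ell(-t_1)$ (the latter because $-t_2 < -t_1$ and $\ell$ non-decreasing). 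Hence both sides are sandwiched and equal, meaning $\ell(t_2) = \ell(t_1)$ and $\ell(-t_2) = \ell(-t_1)$; consequently $Q(\eta_1, t_2) = Q(\eta_1, t_1)$, so $t_2$ is also a minimizer of $Q(\eta_1,\cdot)$. That contradicts $t_1 = t^*(\eta_1)$ being the \emph{infimum} of the minimizer set only if $t_2 > t_1$ — so actually the contradiction comes more cleanly by instead using the infimum property directly at the $\eta_1$ level.

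To make the infimum part airtight, I would argue as follows. From the chain above, whenever $t_2 > t_1$ we are forced into the degenerate equality case $\ell(t_2) = \ell(t_1)$, $\ell(-t_2) = \ell(-t_1)$. Then for \emph{every} $s \in [t_1, t_2]$, monotonicity of $\ell$ squeezes $\ell(s) = \ell(t_1)$ and $\ell(-s) = \ell(-t_1)$, so every such $s$ minimizes $Q(\eta_1, \cdot)$; in particular all of $[t_1,t_2]$ lies in the minimizer set of $Q(\eta_1,\cdot)$, which is consistent with $t_1$ being its infimum, but then I instead invoke the same squeeze on the $\eta_2$ side: every $s \in [t_1,t_2]$ also minimizes $Q(\eta_2,\cdot)$, so $t_1 \in \arg\min_t Q(\eta_2,t)$, whence $t^*(\eta_2) = \inf(\cdot) \le t_1 < t_2 = t^*(\eta_2)$, a contradiction. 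Therefore $t_2 \le t_1$, i.e. $t^*(\eta_2) \le t^*(\eta_1)$.

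The main obstacle is purely bookkeeping around non-uniqueness of the minimizer: the clean comparative-statics inequality only gives $\le$ in a weak ``one of the two is at least as large'' sense, and one must be careful that $t^*$ is defined as the infimum of the (possibly non-singleton, and a priori possibly unbounded-below) minimizer set. I would first note that under Assumption~\ref{loss} the objective $Q(\eta,\cdot)$ is convex, so its minimizer set is a (possibly empty, possibly unbounded) closed interval, and handle the generalized-constant / $\pm\infty$ cases exactly as in Proposition~\ref{discrete_distr} and Theorem~\ref{true_predictor} (for $\eta \in \{0,1\}$ or for non-coercive $\ell$ the infimum may be $\pm\infty$, and the claimed inequality still holds in $[-\infty,+\infty]$). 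Apart from that edge-case care, the argument is the two-line adding-of-inequalities computation above, so I do not expect genuine difficulty — only the need to phrase the contradiction against the \emph{infimum} rather than against a generic minimizer.
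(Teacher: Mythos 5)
Your argument is correct, and it takes a genuinely different --- and simpler --- route than the paper's. The paper fixes $t^*(\eta_1)$ and performs a case analysis on the behaviour of $\ell$ immediately to its left (locally constant versus strictly increasing), invoking one-sided derivatives and first-order ratio conditions to either exhibit a strictly better point for $Q(\eta_2,\cdot)$ or to show that $t^*(\eta_1)$ is itself a minimizer of $Q(\eta_2,\cdot)$. You instead use the standard monotone-comparative-statics device: assuming $t_2=t^*(\eta_2)>t_1=t^*(\eta_1)$, you add the two optimality inequalities $Q(\eta_2,t_2)\le Q(\eta_2,t_1)$ and $Q(\eta_1,t_1)\le Q(\eta_1,t_2)$ to obtain $(\eta_2-\eta_1)\big[(\ell(t_2)-\ell(t_1))-(\ell(-t_2)-\ell(-t_1))\big]\le 0$, and monotonicity of $\ell$ then sandwiches both differences to zero; the resulting identity $Q(\eta_2,t_1)=Q(\eta_2,t_2)$ places $t_1$ in $\arg\min_{t} Q(\eta_2,t)$ and contradicts the infimum definition of $t^*(\eta_2)$. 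This buys brevity and also shows that convexity of $\ell$ enters only to guarantee that the (nonempty, bounded-below) minimizer set is a closed interval so that its infimum is attained --- the core computation uses only that $\ell$ is non-decreasing --- whereas the paper's derivative-based route is longer but pinpoints when the inequality is strict. One shared caveat: both you and the paper dispose of the boundary cases ($\eta\in\{0,1\}$, or non-coercive $\ell$ with an empty or unbounded-below minimizer set) rather quickly; under the literal convention $\inf\emptyset=+\infty$ the claim can fail there (e.g.\ the exponential loss at $\eta_2=1$ has empty $\arg\min$ while $t^*(\eta)\to-\infty$ as $\eta\to 1^-$), so one must agree that minimizing sequences escaping to $-\infty$ yield $t^*=-\infty$; this is a defect of the definition rather than of your argument, and the paper's ``the proof is trivial'' for these cases glosses over the same point.
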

    \begin{proof}
        When $\eta_1 = \eta_2$, $\eta_1 = 0$ or $\eta_2 = 1$, the proof is trivial. Then we only consider the strict inequality in the condition such that $0 < \eta_1 < \eta_2 < 1$.

       We first show that for any fixed $\eta\in(0,1)$, if there exists a $t <t_1$ such that $Q(\eta, t) < Q(\eta, t_1)$, then we must have $t^*(\eta)< t_1$. We prove this by contradiction. Suppose $t^*(\eta) = t_1$ or $t^*(\eta) > t_1$. The first case is impossible because it then follows that $Q(\eta, t) < Q(\eta, t^*(\eta))$, which contradicts to the definition of $t^*(\eta)$; for the second case, we have 
        \begin{align*}
            &\frac{t^*(\eta) - t_1}{t^*(\eta) - t}Q(\eta, t) + \frac{t_1 - t}{t^*(\eta) - t}Q(\eta, t^*(\eta))\\ 
            <\ & \frac{t^*(\eta) - t_1}{t^*(\eta) - t}Q(\eta, t_1) + \frac{t_1 - t}{t^*(\eta) - t}Q(\eta, t_1)\\
            =\ & Q(\eta, t_1),
        \end{align*}
        where the inequality uses the assumption $Q(\eta,t_2)<Q(\eta,t_1)$ and the definition of $t^*(\eta)$. Since $\frac{t^*(\eta) - t_1}{t^*(\eta) - t_2}t_2 + \frac{t_1 - t_2}{t^*(\eta) - t_2}t^*(\eta) = t_1$, this strict inequality contradicts to the convexity of $Q(\eta, \cdot)$.
        
        From the above discussion, to prove $t^*(\eta_2) \le t^*(\eta_1)$, we just have to show that there exists a $t < t^*(\eta_1)$ such that $Q(\eta_2, t) < Q(\eta_2, t^*(\eta_1))$ or $t^*(\eta_1)$ is also a minimizer of $Q(\eta_2,\cdot)$. We are interested in the behavior of $\ell$ at point $t$ when $t$ is on the left side of $t^*(\eta_1)$. Since $\ell$ is non-decreasing, the possibilities of the behavior can be divided into the following two cases.
        
        {\noindent\bfseries Case 1:} There exists a $t < t^*(\eta_1)$ such that $\ell(t) = \ell(t^*(\eta_1))$. In this case we have 
        \begin{align*}
            Q(\eta_2, t) - Q(\eta_2, t^*(\eta_1)) &= (1 -\eta_2)(\ell(-t) - \ell(-t^*(\eta_1)))\\
            &= \big((1 - \eta_2)/(1 - \eta_1)\big) \big((1 - \eta_1)(\ell(-t) - \ell(-t^*(\eta_1)))\big)\\
            &= \big((1 - \eta_2)/(1 - \eta_1)\big) \big(Q(\eta_1, t) - Q(\eta_1, t^*(\eta_1))\big)\\
            &< 0,
        \end{align*}
        where the first and thrid equalities use the assumption $\ell(t) = \ell(t^*(\eta_1))$, the last inequality follows from the definition of $t^*(\eta_1)$. Therefore, we conclude that $t^*(\eta_2) < t^*(\eta_1)$.

        {\noindent\bfseries Case 2:} There exists a $t < t^*(\eta_1)$ such that $\ell(t) < \ell(t^*(\eta_1))$. By convexity of $\ell$, we know $\ell(t_1) < \ell(t^*(\eta_1)) < \ell(t_2)$ for any $t_1, t_2$ such that $t_1 < t^*(\eta_1) < t_2$. Denote by $l_+'(\cdot)$ and $l_-'(\cdot)$ the right and left derivative functions of $\ell(\cdot)$ respectively. (side derivatives must exist for monotonic functions.) The definition of $t^*(\eta_1)$ indicates that $Q(\eta_1, t_1) > Q(\eta_1, t^*(\eta_1))$ and $Q(\eta_1, t_2) \ge Q(\eta_1, t^*(\eta_1))$, where this can be rewritten in the following inequalities
        \begin{align*}
            \frac{\ell(-t_1) - \ell(-t^*(\eta_1))}{\ell(t^*(\eta_1)) - \ell(t_1)} > \frac{\eta_1}{1 - \eta_1} \ge \frac{\ell(-t_2) - \ell(-t^*(\eta_1))}{\ell(t^*(\eta_1)) - \ell(t_2)}.
        \end{align*}
        Divide both the numerators and denominators of the left hand side and the right hand side of the above inequality by $t^*(\eta_1) - t_1$ and $t^*(\eta_1) - t_2$ respectively. By taking limit as $t_1 \to t^*(\eta_1)_-$ and $t_2 \to t^*(\eta_1)_+$, we get
        \begin{align*}
            \frac{{\ell_+'}(-t^*(\eta_1))}{{\ell_-'}(t^*(\eta_1))} \ge \frac{\eta_1}{1 - \eta_1} \ge \frac{{\ell_-'}(-t^*(\eta_1))}{{\ell_+'}(t^*(\eta_1))}.
        \end{align*}
        According to the convexity of $\ell$ and the fact $\ell(t_1) < \ell(t^*(\eta_1))$, we know that $0<l_-'(t^*(\eta_1)) \le l_+'(t^*(\eta_1))$. Then the above inequality is well defined. Since $\eta_2 > \eta_1$, we know $\frac{\eta_2}{1 - \eta_2} > \frac{\eta_1}{1-\eta_1}$. If $\frac{\eta_2}{1 - \eta_2} > \frac{{\ell_+'}(-t^*(\eta_1))}{{\ell_-'}(t^*(\eta_1))}$, then by the definition of side derivative, there exists a $t < t^*(\eta_1)$ such that $\frac{\eta_2}{1-\eta_2} > \frac{\ell(-t) - \ell(-t^*(\eta_1))}{\ell(t^*(\eta_1)) - \ell(t)}$, which is equivalent to $Q(\eta_2, t) < Q(\eta_2, t^*(\eta_1))$, hence we conclude $t^*(\eta_2) < t^*(\eta_1)$. If $\frac{\eta_2}{1 - \eta_2} \in (\frac{\eta_1}{1-\eta_1},\frac{{\ell_+'}(-t^*(\eta_1))}{{\ell_-'}(t^*(\eta_1))}] \subset [\frac{{\ell_+'}(-t^*(\eta_1))}{{\ell_-'}(t^*(\eta_1))}, \frac{{\ell_-'}(-t^*(\eta_1))}{{\ell_+'}(t^*(\eta_1))}]$, this is equivalent to that $t^*(\eta_1)$ is also a minimizer of $Q(\eta_2, \cdot)$, hence we conclude $t^*(\eta_2) \le t^*(\eta_1)$.
        The proof is then completed.
    \end{proof}\jd
    
    Lemma~\ref{lemma_regularity} also has some implications for binary classification. For a binary classification problem, we aim to learn a classifier $f :\X \to \{-1,1\}$ from a given sample $S$. The performance of a classifier $f$ is usually measured by its generalization error $\bE_Z[\ell(-Yf(X))]$ with a loss $\ell$ that is convex, non-decreasing, and nonnegative (examples of losses can be found in the remark below Definition \ref{def:minimizer}). Similar to the arguments of \eqref{eq:d_rho}, one can derive that, for almost every $x\in\X$, the true predictor (target function) $f_\rho$ with loss $\ell$ can be written as $f_\rho(x) = \arg\min_{t\in\R} p(x)\ell(-t) + (1-p(x))\ell(t)$, where $p(x) = Prob\{Y=1|X=x\}$ is the conditional probability. Therefore, by setting $Q(p,t) = p\ell(t) + (1-p)\ell(-t)$, we get $f_\rho(x) = -\arg\min_{t\in\R} Q(p(x), t) = -t^*(p(x))$ for almost every $x\in\X$. Note Lemma~\ref{lemma_regularity} shows that $t^{*}(\cdot)$ is non-increasing on $[0,1]$. Then we know that the value of the true predictor $f_\rho(x) = -t^*(p(x))$ increases as the conditional probability $p(x)$ increases, which implies that the tendency of the sample $x$ being classified to class $\{1\}$ by the true predictor is also increasing.

    The following property can be directly obtained by using Lemma~\ref{lemma_regularity}. 
    \begin{proposition}\label{regularity}
        Suppose Assumption \ref{loss} holds. If $\eta(x,x') \le \min\{\eta(x,x), \eta(x',x')\}$, then
        \begin{align*}
            \max\{d_\rho(x, x), d_\rho(x', x')\} \le d_\rho(x,x').
        \end{align*}
    \end{proposition}
    \begin{proof}
        Let $\eta_1 = \eta(x,x'), \eta_2 = \eta(x,x), \eta_3 = \eta(x',x')$, then $d_\rho(x,x') = t^*(\eta_1), d_\rho(x,x) = t^*(\eta_2)$ and $d_\rho(x',x') = t^*(\eta_3)$. By Lemma \ref{lemma_regularity}, we get the desired result immediately. 
    \end{proof} 
    \begin{remark}
         Define $\calP = \{P_x \in \R^m | x \in \X\}$ as the conditional distribution family. One can easily observe that $\calP$ is a subset of the probability simplex
    \begin{align*}
        \Delta_m = \left\{(p_1,\ldots,p_m) \in\R^m \left. \right | \sum_{i = 1}^m p_i = 1, p_i \ge 0\right\}.
    \end{align*}
    The condition in Theorem~\ref{regularity} can be written as $\langle P_x, P_{x'}\rangle \le \min\{\|P_x\|^2_2, \|P_{x'}\|^2_2\}$ for almost $x, x' \in \X$. In the aspects of geometry, it requires that the inner product of any two probability vectors in $\calP$ is less than or equal to both the square of their Euclidean norms. One example satisfying this condition is that $\calP$ is a subset of the intersection of the probability simplex $\Delta_m$ and the ball centered at the origin with radius $r \in [0,1]$. In this case, the equality of the condition holds only when $P_x = P_{x'}$.

    Furthermore, $\calP$ has an empty interior in the sub-topology of probability simplex $\Delta_m$. Otherwise, we can find a small enough vector $\epsilon$ such that $P_x + \epsilon \in \calP$ and $\langle P_x, P_x +\epsilon \rangle > \min\{\|P_x\|^2, \|P_x + \epsilon\|^2\}$, where $P_x$ is a relative interior point of $\calP$. 
    \end{remark}

\section{Experiments}\label{sec:exper}
{ 
To empirically validate our theoretical findings, we compare the canonical distance-based criterion
\[
\|f(x)-f(x')\|_2 - b,
\]
which is commonly used in deep metric learning (DML), with our proposed structured model $d(x,x')$ defined in \eqref{eq:structured-NNs}. In both cases, the underlying embedding network is a three-layer MLP producing an embedding $f(x)\in\R^m$.

For our model, we use a practical instantiation of the structured metric in \eqref{eq:structured-NNs}. Specifically, we transform the MLP output into a nonnegative $\ell_1$-normalized representation
\[
g(x):=\frac{1}{\|f(x)\|_1}\big(|(f(x))_1|,\ldots, |(f(x))_m|\big)\in\Delta_m,
\]
where $(f(x))_i$ denotes the $i$-th component of $f(x)$ and $\Delta_m$ is the probability simplex. This normalized vector makes the learned representation naturally interpretable as a probability-vector surrogate of the theoretical construction. Our model is then given by
\[
d(x,x')
=
F_a\!\left(1-2\langle g(x),g(x')\rangle\right).
\]
}

\begin{table}[t]
\centering
\caption{Balanced accuracy (bAcc, mean $\pm$ std over 5 runs) on FashionMNIST.}
\label{tab:fashion_pairwise_bacc}
\begin{tabular}{lccc}
\hline
Pair & Difficulty & Ours (bAcc, \%) & DML (bAcc, \%) \\
\hline
0 vs 2 & medium & $\mathbf{93.86 \pm 0.32}$ & $93.67 \pm 0.18$ \\
\hline
0 vs 6 & hard   & $\mathbf{74.87 \pm 0.36}$ & $74.30 \pm 0.14$ \\
\hline
\end{tabular}
\end{table}

\begin{figure}[t]
\centering

\begin{minipage}[t]{0.48\textwidth}
    \centering
    \includegraphics[width=\linewidth]{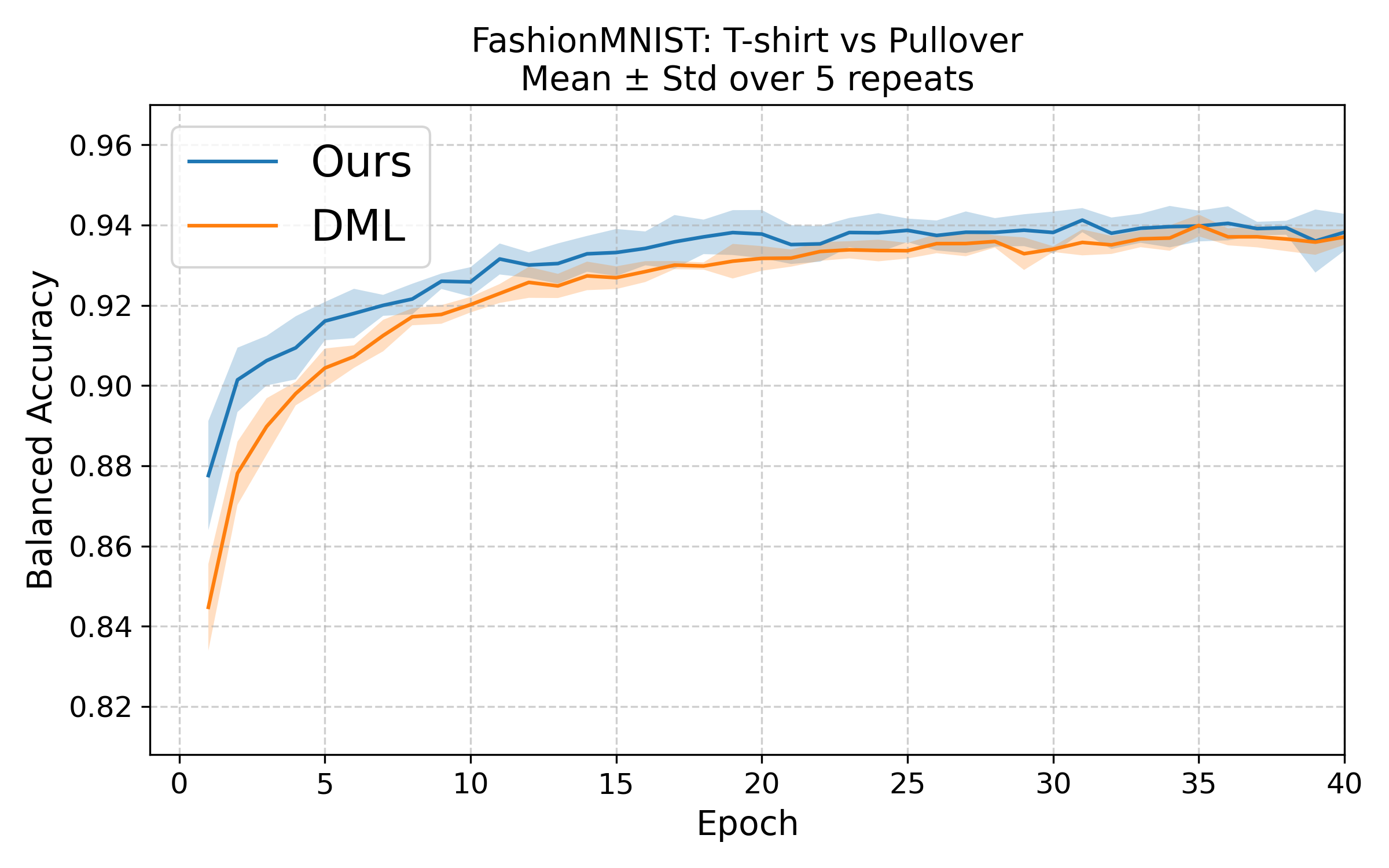}
    
    \vspace{2mm}
    \small (a) FashionMNIST 0 vs 2 (medium)
\end{minipage}
\hfill
\begin{minipage}[t]{0.48\textwidth}
    \centering
    \includegraphics[width=\linewidth]{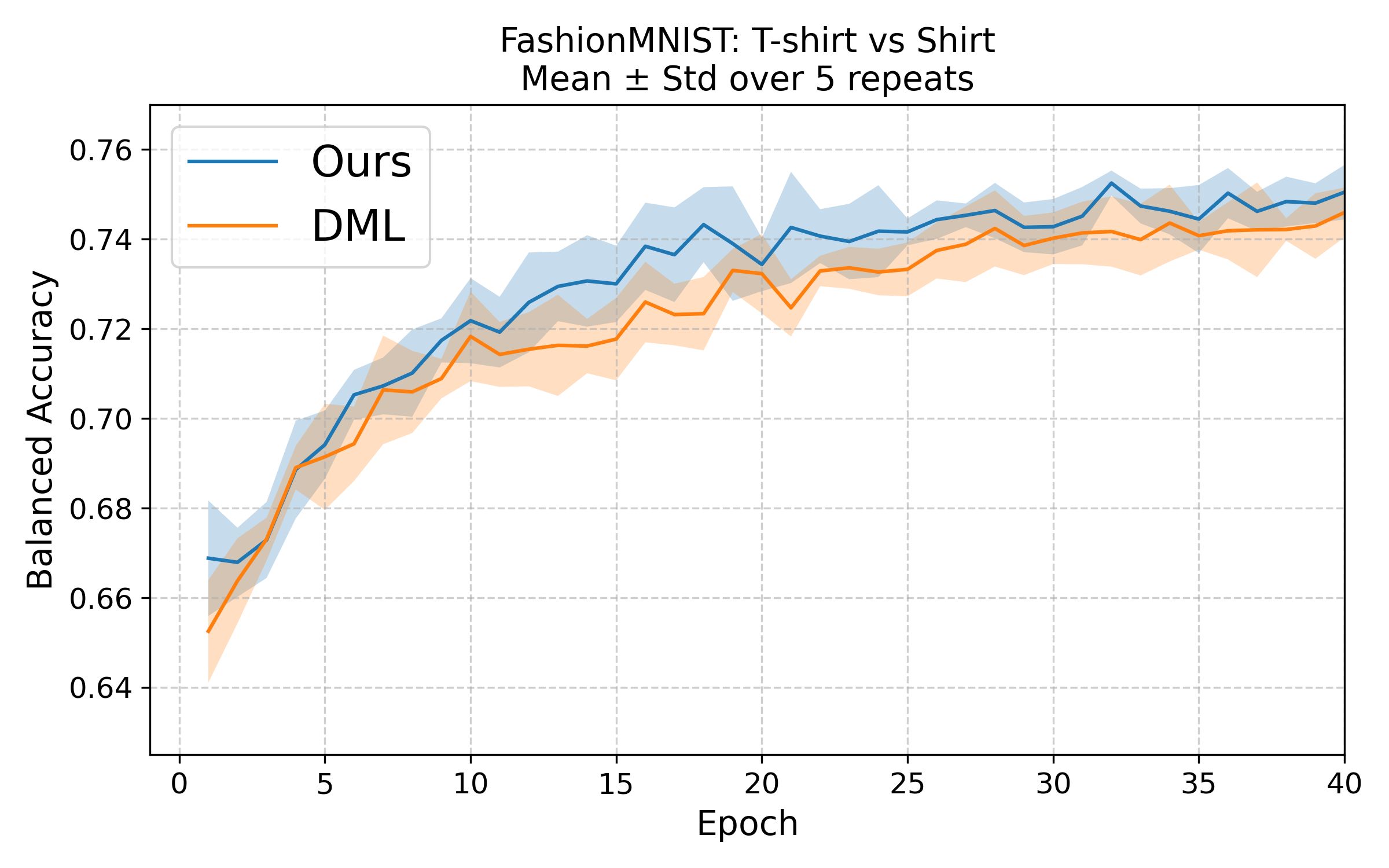}
    
    \vspace{2mm}
    \small (b) FashionMNIST 0 vs 6 (hard)
\end{minipage}

\caption{Epoch-wise balanced accuracy on the same FashionMNIST pairs reported in Table~\ref{tab:fashion_pairwise_bacc}. Curves show mean $\pm$ standard deviation over 5 runs.}
\label{fig:fashion_curves}
\end{figure}

{ 
\subsection{Real-world data}
We evaluate the baseline and our model on binary pairwise metric learning tasks derived from the FashionMNIST dataset. We consider two representative class pairs with different difficulty levels: a medium-difficulty pair (T-shirt/top vs.\ Pullover, classes $0$ vs.\ $2$) and a hard pair (T-shirt/top vs.\ Shirt, classes $0$ vs.\ $6$). For each task, we construct same/different pairs and use balanced accuracy (bAcc) as the evaluation criterion.

\vspace{1mm}

\noindent\textbf{Experimental setup.}
We set $m=2$, which is naturally aligned with the binary-label setting. Both models use a three-layer MLP embedding network $f(x):\R^{28\times 28}\to\R^2$ with hidden width $64$, and are trained using the same hinge loss and the Adam optimizer. The learning rate is set to $10^{-4}$, and each model is trained for $40$ epochs. All reported results are averaged over five independent runs.

\noindent\textbf{Results.}
Table~\ref{tab:fashion_pairwise_bacc} reports the balanced accuracy. On the medium-difficulty pair $0$ vs.\ $2$, both models achieve high performance, with our model slightly outperforming DML ($93.86 \%$ vs.\ $93.67 \%$). On the harder pair $0$ vs.\ $6$, the performance of both models drops substantially, reflecting the increased difficulty of fine-grained discrimination on FashionMNIST. Nevertheless, our model again attains a slightly better final bAcc ($74.87 \%$ vs.\ $74.30 \%$).

Figure~\ref{fig:fashion_curves} shows the epoch-wise balanced accuracy for the same two tasks.
For the medium pair, both models converge to a high-accuracy regime, while our model improves more rapidly in the early epochs and maintains a small but consistent advantage throughout most of training. 
For the hard pair, the overall learning curves are lower, but the same qualitative behavior remains: our model stays slightly above the DML baseline for most epochs and yields a modestly better final accuracy.
Overall, these results show that the proposed structured metric is empirically competitive with a standard deep metric baseline under matched MLP architectures, while also clearly reflecting the difficulty gap between medium and hard pairs.
}

\begin{table}[t]
\centering
\caption{Balanced accuracy (bAcc, \%, mean $\pm$ std over 5 runs) on the synthetic pairwise task. 
Left: varying the uncertain-region mass $\lambda \in \{0.2,0.3,0.4,0.5\}$ with $m=10$. 
Right: varying the number of classes $m \in \{2,3,5,10\}$ with $\lambda=0.5$.}
\label{tab:simulation_bacc}

\begin{tabular}{lcccc}
\hline
 & $\lambda=0.2$ & $\lambda=0.3$ & $\lambda=0.4$ & $\lambda=0.5$ \\
\hline
Ours & $\mathbf{99.93 \pm 0.03}$ & $\mathbf{99.95 \pm 0.04}$ & $\mathbf{99.93 \pm 0.05}$ & $\mathbf{99.92 \pm 0.05}$ \\
\hline
DML  & $96.01 \pm 1.87$ & $89.94 \pm 0.23$ & $83.11 \pm 0.13$ & $75.14 \pm 0.16$ \\
\hline
\end{tabular}

\vspace{8pt}

\begin{tabular}{lcccc}
\hline
 & $m=2$ & $m=3$ & $m=5$ & $m=10$ \\
\hline
Ours & $\mathbf{93.36 \pm 0.11}$ & $\mathbf{99.24 \pm 0.05}$ & $\mathbf{99.91 \pm 0.03}$ & $\mathbf{99.95 \pm 0.05}$ \\
\hline
DML  & $92.28 \pm 0.20$ & $75.41 \pm 0.24$ & $74.99 \pm 0.37$ & $74.88 \pm 0.22$ \\
\hline
\end{tabular}
\end{table}
\subsection{Synthetic data} { 
To empirically illustrate the mechanism behind our theoretical construction, we evaluate our structured model $d(x,x') = F_a\!\left(1-2\langle g(x),g(x')\rangle\right)$ against the canonical distance-based criterion $\|f(x)-f(x')\|_2 - b$ in a controlled setting.
The goal of this experiment is not to maximize raw predictive performance, but to examine whether the two model classes behave differently when the true metric is governed by an inner-product structure rather than by a standard distance threshold.

\vspace{1mm}

\noindent\textbf{Synthetic construction.}
This construction is inspired by the discussion after Proposition~3.3, in particular by the possibility that the true metric may violate standard distance-based intuition.
We generate one-dimensional samples from two regions, denoted by $A$ and $B$.
We assign a probability mass of $\lambda \in (0,1)$ to the uncertain region $B$, with the remaining $1-\lambda$ mass assigned to $A$.
Each region is associated with a specific probability vector:
\[
P^+ = e_m,
\qquad
P^- = \Bigl(p,\frac{1-p}{m-1},\ldots,\frac{1-p}{m-1}\Bigr),
\]
where $e_m$ is the $m$-th standard basis vector and we fix $p=0.6$.
For a pair $(x,x')$, the target similarity defined in \eqref{eta} is
\[
\eta(x,x') = Prob\{Y = Y'|X = x, X' = x'\} = \langle P(x),P(x')\rangle.
\]
We assign the binary pairwise label $\tau(y,y') = 1$ if $\eta(x,x') \ge 1/2$  and $-1$ otherwise.
This construction ensures that the binary labels are induced by the sign of $1 - 2\eta(x, x')$, ensuring that the underlying tasks are consistent with the structure of our model.
In contrast, a distance-based metric can distinguish separate cross-region pairs, but may fail on the crucial $B$--$B$ pairs when their internal similarity falls below the threshold $1/2$.
More explicitly, for any $x_A\in A, x_B \in B$, the three canonical pair types satisfy
\[
\eta(x_A,x_A) = \langle P^+,P^+\rangle = 1,\quad
\eta(x_A,x_B) = \langle P^+,P^-\rangle = \frac{1-p}{m-1}\]\text{ and } 
\[\eta(x_B,x_B) = \langle P^-,P^-\rangle
= p^2 + \frac{(1-p)^2}{m-1}.
\]
Hence, for $p=0.6$, we always have $\eta_{AA}>1/2$ and $\eta_{AB}<1/2$, so $A$--$A$ are similar pairs while $A$--$B$ are different pairs.
However, the status of $B$--$B$ pairs depends on $m$:
when $m=2$, $\eta_{BB}=0.52>1/2$, so $B$--$B$ are similar pairs;
when $m\ge 3$, $\eta_{BB}<1/2$, so $B$--$B$ are different pairs.
This transition explains why the synthetic task is benign for the distance-based baseline when $m=2$, but becomes increasingly mismatched with a purely distance-based metric once $m\ge 3$.

In addition, since the uncertain region $B$ has probability mass $\lambda$, the proportion of $B$--$B$ pairs is exactly $\lambda^2$.
Therefore, once $m\ge3$, increasing $\lambda$ primarily increases the frequency of the structurally difficult pairs.
This explains why the degradation of the distance-based DML baseline is closely tied to $\lambda^2$ in our experiments.

\vspace{1mm}

\noindent\textbf{Experimental setup.}
Both models utilize a three-layer MLP with hidden width $16$.
We conduct two sets of experiments: (i) a  $\lambda$-sweep, where we fix $m=10$ and vary $\lambda \!\in\! \{0.2, 0.3, 0.4, 0.5\}$ to observe the impact of increased uncertain region, and (ii) an $m$-sweep, where we fix $\lambda=0.5$ and vary $m \in \{2, 3, 5, 10\}$ to test the effect of the number of labels.
Models are trained using the same hinge loss and Adam optimizer for $80$ epochs.
We report the bAcc averaged over five independent runs.

\vspace{1mm}

\noindent\textbf{Results for varying $\lambda$.}
The upper half of Table~\ref{tab:simulation_bacc} reports the balanced accuracy for different values of $\lambda$.
Our model achieves near-optimal performance across the whole range, with bAcc around $99.9\%$.
In contrast, the DML baseline deteriorates monotonically as $\lambda$ increases, dropping from $96.01$ at $\lambda=0.2$ to $75.14$ at $\lambda=0.5$.
As shown in Figure \ref{fig:lam^2-bacc}, the bAcc of the DML baseline decreases approximately linearly with $\lambda^2$.
This observation is consistent with our theoretical prediction: as $\lambda$ grows, the proportion of $B$--$B$ pairs increases, and these are precisely the pairs that can not be captured by a simple distance-based model.}

\vspace{1mm}

\begin{figure}
    \centering
    \includegraphics[width=0.5\linewidth]{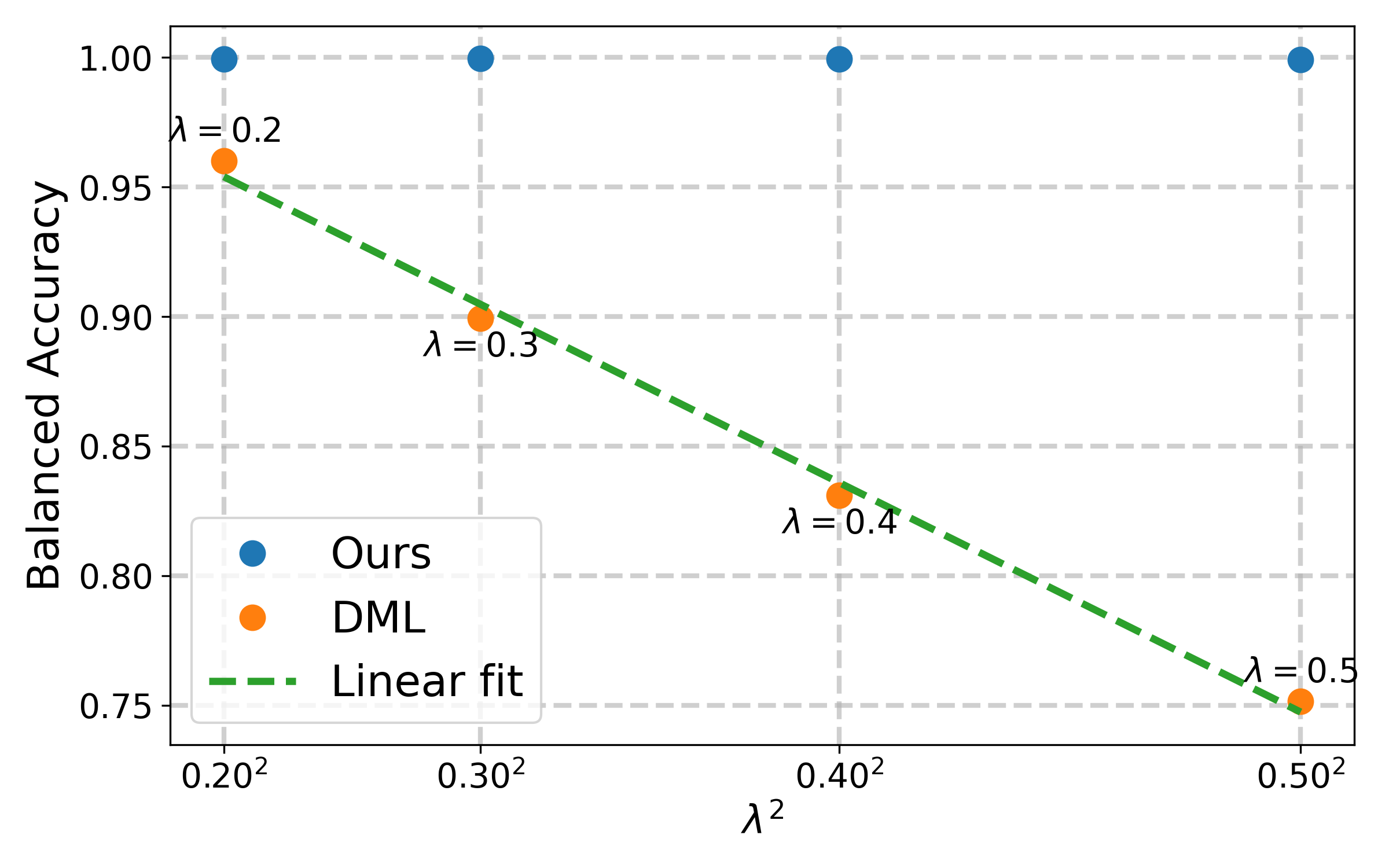}
    \caption{Balanced accuracy versus $\lambda^2$ on the metric learning task with $m=10$. Each point is annotated by the corresponding value of $\lambda.$}
    \label{fig:lam^2-bacc}
\end{figure}
{ 
\noindent\textbf{Results for varying $m$.}
The lower half of Table~\ref{tab:simulation_bacc} reports the balanced accuracy for different values of $m$ at $\lambda=0.5$. When $m=2$, the two models perform similarly ($93.36$ for ours versus $92.28$ for DML). This is expected since the similarity in region $B$ satisfies
\[
\eta(x_B,x_B) = \|P^-\|_2^2 = p^2 + \frac{(1-p)^2}{m-1} = 0.52 > \tfrac12,
\]
so the $B$--$B$ pairs do not violate the distance-based structure.
However, as $m \ge 3$, we have
\[
\|P^-\|_2^2 < \tfrac12,
\]
and the discrepancy becomes pronounced. Our model rapidly approaches near-perfect accuracy ($>99.9\%$) while DML drops sharply to about $75\%$ and saturates.
This clear gap is consistent with our theory.
When the number of classes becomes larger, a simple distance-based metric is less able to describe the target pairwise similarity.

\vspace{1mm}

\noindent\textbf{Discussion.}
These synthetic results are in line with our theory.
Our model stays accurate when $\lambda$ or $m$ increases, since it is built to match the probabilistic structure of the target similarity.
In contrast, although the DML baseline performs well in simpler cases, its accuracy drops once the probabilistic structure can no longer be captured by a standard distance-based metric.

}
    \section{Conclusion}\label{sec:conclu}
    In this paper, we provide a comprehensive generalization analysis for metric and similarity learning with the hinge loss. By deriving an explicit representation of the true metric \(d_\rho\) under the hinge loss, we construct a novel hypothesis space consisting of structured deep ReLU networks, and establish excess risk bounds by carefully controlling both the approximation error and the estimation error within this structured hypothesis space. In particular, we derive an explicit learning rate, up to a logarithmic factor, $O\big(n^{-\frac{(\theta + 1) r}{d + (\theta + 2)r}}\big)$. 
We also revisit several structural properties of the problem setting and the true metric under a general loss function.  Experiments show that the proposed structured model is empirically competitive on real-world data and better captures the underlying similarity mechanism in the synthetic settings studied in this paper.  An interesting direction for future work is to extend our results to the logistic loss, whose target function is unbounded \cite{zhang2024classification}.

\section*{Acknowledgments} The work of Puyu Wang is supported by the Alexander von Humboldt Foundation. The work of Ding-Xuan Zhou is partially supported by the Australian Research Council under project DP240101919.

\bibliography{sample}
\bibliographystyle{plain}
\end{document}